\newtheorem{remark}{Remark}
\newtheorem{thm}{Theorem}
\newtheorem*{problem_1_bis}{Aggregating with Missing Task Level Information}
\newtheorem*{problem_2_bis}{Aggregating Missing Instance Level Information}
\title{Towards More Robust NLP System Evaluation: Handling Missing Scores in Benchmarks}
\author{%
Anas Himmi\thanks{main authors}\\
  MICS, CentraleSupelec \\
  \texttt{anas.himmi@student-cs.fr} \\ \And
Ekhine Irurozki$^*$\\
S2A, Telecom Paris\\
  \texttt{ekhine.irurozki@telecom-paris.fr} \\ \AND
  Nathan Noiry \\
  Owkin \\
  \texttt{noirynathan@gmail.com} \\ \And
  Stephan Clemençon \\
  S2A, Telecom Paris \\
  \texttt{stephan.clemençon@telecom-paris.fr} \\ \AND
  Pierre Colombo$^*$ \\
  MICS, CentraleSupelec \\
  \texttt{pierre.colombo@centralesupelec.fr} \\ 
%
}
\begin{document}

\maketitle

\begin{abstract}
The evaluation of natural language processing (NLP) systems is crucial for advancing the field, but current benchmarking approaches often assume that all systems have scores available for all tasks, which is not always practical. In reality, several factors such as the cost of running baseline, private systems, computational limitations, or incomplete data may prevent some systems from being evaluated on entire tasks. This paper formalize an existing problem in NLP research: benchmarking when some systems scores are missing on the task, and proposes a novel approach to address it. Our method utilizes a compatible partial ranking approach to impute missing data, which is then aggregated using the Borda count method. It includes two refinements designed specifically for scenarios where either task-level or instance-level scores are available. We also introduce an extended benchmark, which contains over 131 million scores, an order of magnitude larger than existing benchmarks. We validate our methods and demonstrate their effectiveness in addressing the challenge of missing system evaluation on an entire task. This work highlights the need for more comprehensive benchmarking approaches that can handle real-world scenarios where not all systems are evaluated on the entire task.
\end{abstract}

\section{Introduction}

Benchmarking and system evaluation are critical processes for assessing the performance of AI systems, providing a standardized means of comparing various models and techniques while keeping track of technological advancements \cite{ruder2021benchmarking,benchmark_lottery,post-2018-call}. However, evaluating general-purpose systems, such as foundation models used for generative tasks \cite{lehman2023need,kocoń2023chatgpt,openai2023gpt4,brown2020language,raffel2020exploring}, presents unique challenges. A single task, metric, or dataset may not be sufficient to effectively gauge their capabilities \cite{NIPS2006_f44ee263,novikova2018rankme,sedoc-ungar-2020-item}. Therefore, it is crucial to develop tools that can benchmark these systems on a multitude of tasks \cite{aribandi2021ext5}, enabling a comprehensive assessment of their overall performance \cite{peyrard2021better}.

In recent years, the field of natural language processing (NLP) has made significant strides, with frequent emergence of new models \cite{lehman2023need,kocon2023chatgpt,brown2020language,openai2023gpt4,raffel2020exploring,liu2019roberta,fan2021beyond} and techniques \cite{bommasani2021opportunities,hupkes2022state}. To evaluate the performance of these systems across various tasks, datasets, and metrics \cite{colombo2022glass} have been created. However, with the increasing complexity of these benchmarks, missing scores has become a significant challenge. Missing data can arise from a variety of sources, such as benchmarks that are too large or time-consuming to run (\textit{e.g.}, BigBench has recently introduce MiniBench for these reasons \cite{srivastava2022beyond}), high costs associated with reproducing experiments (\textit{e.g.}, see Table 3 in \cite{artetxe-etal-2022-corpus}), incomplete datasets (see Table 5 in  \cite{reid-artetxe-2022-paradise}), data collection errors, data cleaning procedures, data privacy concerns (particularly in-house datasets \cite{guibon-etal-2021-shot}), and specialized expertise required to process niche datasets \cite{peng2019transfer}. In recent work, two main approaches have been followed to deal with missing scores, which are discarding data \cite{pfeiffer-etal-2022-lifting} or ignoring certain tasks (see Table 10 in \cite{lin-etal-2022-shot} and Table 5 in \cite{martin-etal-2020-camembert}) or evaluations. However, these approaches are unsatisfactory as they can lead to biased and unreliable evaluations.

In this work, we aim to address the challenge of benchmarking NLP systems \emph{when one or several systems cannot be evaluated on a specific task}. We propose the development of effective methods for aggregating metrics that can handle missing data and enable a comprehensive assessment of system performance. Our approach will ensure the reliability and validity of NLP system evaluations and contribute to the creation of benchmarks that can be used to compare and evaluate NLP systems effectively. Specifically, our contributions are listed below.

\begin{enumerate}[wide, labelwidth=!, labelindent=0pt]
    \item \textbf{Introducing a new problem with a direct impact on NLP research:} benchmarking when there are missing system evaluations for an entire task, which has practical implications \cite{pfeiffer-etal-2022-lifting,lin-etal-2022-shot,martin-etal-2020-camembert,guibon-etal-2021-shot,peng2019transfer}.
    \item \textbf{A novel method for benchmarking NLP systems with missing system scores.} We present a novel method that effectively tackles the issue of missing system evaluations for entire tasks. Our includes a novel combinatorial approach for inputting missing data in partial rankings. It allows using standard rank aggregation algorithms such as Borda and offers two refinements tailored to the availability of either task-level or instance-level scores of the systems across different tasks. 
    \item \textbf{An extended benchmark for comprehensive and accurate evaluation of NLP systems:} previous works on score aggregation relied on a benchmark of 250K scores \cite{colombo2022what,peyrard2021better}, and did not release system's input, output, and ground truth texts. In our work, we collected their scores and extended the benchmark by adding over 131M scores.
    \item \textbf{Extensive validation of benchmarking methods:} Results show that our method effectively handles missing scores and is more robust than existing methods, yielding different results. These highlight the importance of considering missing system scores in evaluation.
\end{enumerate}
In order to promote the adoption of our method, we will publicly share the code and data at \url{https://github.com/AnasHimmi/MissingDataRanking}, enabling researchers to apply our approach and assess NLP system performance more reliably.

\subsection{General Considerations}
\textbf{Comparing systems with benchmarks}. Benchmarking aims to determine the ranking of systems based on their scores to identify the best-performing systems. In this process, each system is evaluated on individual tests within a larger set and assigned a score according to a specific metric. Depending on the available information, two approaches are typically employed. When only \textbf{task-level} information is available (i.e., the system scores on each task), a \textbf{task-level aggregation} is utilized to obtain the final ranking. On the other hand, when \textbf{instance-level information} is available, i.e., the system scores on each instance of each task test set, an \textbf{instance-level aggregation} method is used to obtain the final system ranking. Mean aggregation has commonly been adopted to consolidate information at both the instance and task levels.

\textbf{Benchmarking in the presence of missing data}. As benchmarks and models continue to grow in size and complexity, the occurrence of missing system performance of entire task becomes increasingly common. This is particularly true in situations where one or more systems cannot be evaluated on a specific task due to factors such as the high cost of running the model or the extensive computational requirements of the benchmarks \cite{gehrmann2022gemv2,gehrmann2021gem,gehrmann2022repairing}. An illustration of the general framework (\textit{i.e.}, with instance and system level) for benchmarking can be found in \autoref{fig:illustation}.
\begin{figure}
    \centering
    \includegraphics[width=0.6\textwidth]{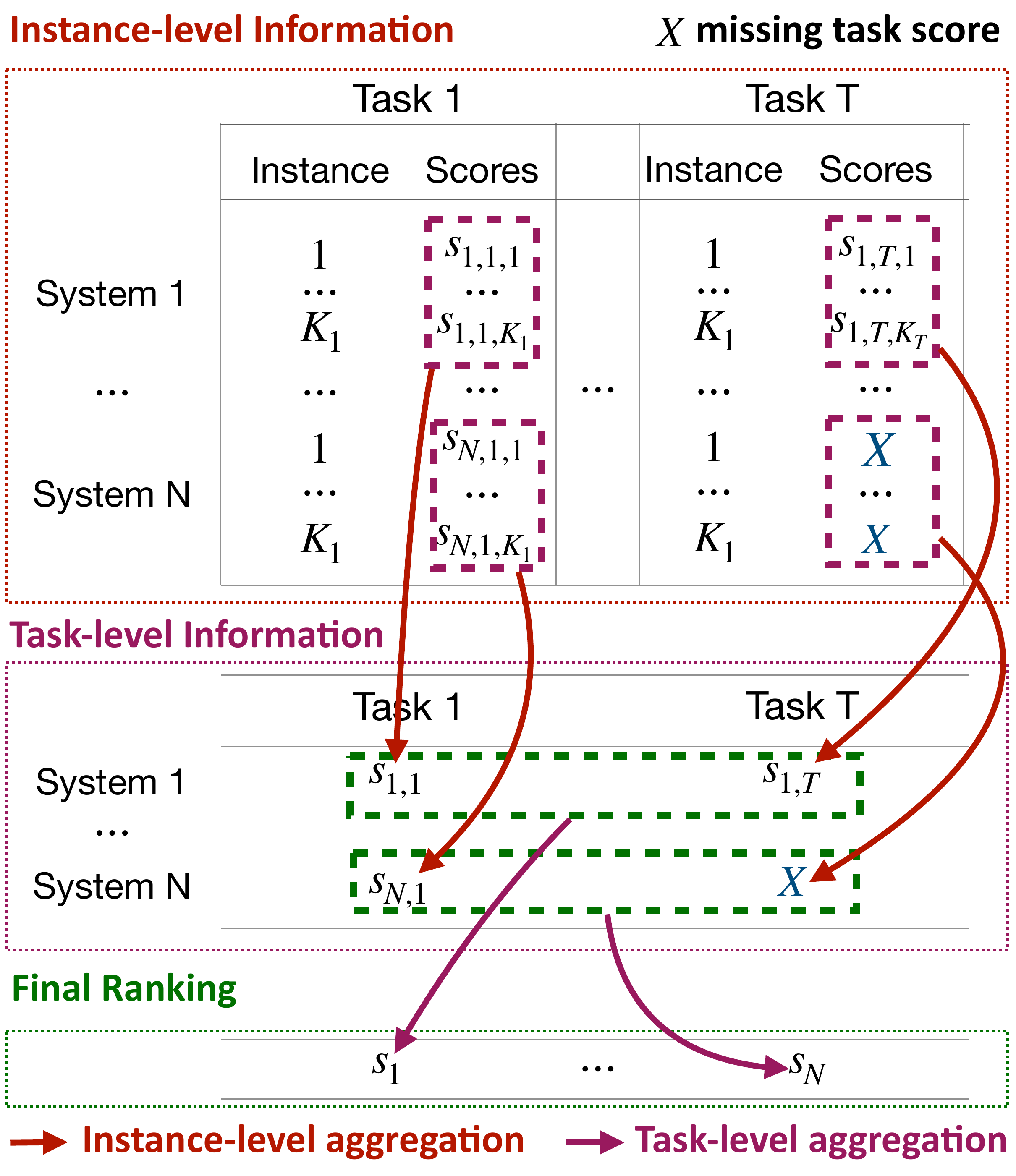}
    \caption{\textbf{Framework for benchmarking NLP systems with two information granularity}: \emph{instance-level} (red above) and \emph{task-level} (purple below). The final goal of benchmarking is to produce a ranking (green bottom). The instance-level aggregation allows for the derivation of task-level information, which is used to synthesize system performance via the final ranking (in green). \textbf{X} indicates the presence of missing values in the benchmark.}
    \label{fig:illustation}
\end{figure}

\subsection{Problem Formulation}

The notation used in this discussion will closely follow that of a previously mentioned \cite{colombo2022what}. In essence, we are dealing with a scenario where $N$ systems are being compared based on their performance on $T$ different tasks. Each task $t \in \{1, \ldots, T\}$ has a specific metric $m_t$ associated with it and has been evaluated on a $k$ of test instances with $k \in \{1, \ldots, K_t\}$, where $K_t$ is the test size of task $t$. The score of each system on each instance of each test set is represented by the real number $s_{n,t,k} \in \mathbb{R}$. The final goal of benchmarking is to output a ranking of each systems according to some objective criterion. We denote by $\mathfrak{S}_N$ the symmetric group on $N$ elements. With this objective in mind aggregating instance and task level information is equivalent to computing a permutation $\sigma \in \mathfrak{S}_N$ corresponding to the final ranking of the $N$ systems. In this formalism, system $i$ is the $\sigma_i$-th best system according to the considered aggregation. Equivalently, ordering $\pi = (\pi_1 \succ \pi_2 \succ \ldots \succ \pi_{N})$ denotes that $\pi_i$ is better than system $\pi_{i+1}$ for all $i$. Let us first defined the different granularity of benchmarking depending wether we have access to individual instance scores.

\begin{problem_1_bis}\label{pb:1}  Given a set of scores $(s_{n,t}, \, 1\leq n\leq N_t, \ 1 \leq t \leq T)$ where $N_t$ is the number of systems for which we have access to the score on task $t$, find a proper aggregation procedure. 
\end{problem_1_bis}
Thus the problem of task level information aggregation boils down to finding $f^T$:
\begin{equation}
    f^T: \ \underbrace{\mathcal{S}_{N_1} \times \cdots \times \mathcal{S}_{N_T}}_{T \, times} \longrightarrow \mathfrak{S}_N.
\end{equation}
where $\mathcal{S}_{N_t} = (s_{n,t}, 1\leq n \leq N_t)$ is the set of score achieved by each system evaluated on the task $t$. Note that only $N_t$ systems are evaluate on task $t$.

In many cases, we not only have access to task-level performance but also individual instance-level scores. As a result, the challenge lies in effectively aggregating information at the instance level.

\begin{problem_2_bis}\label{pb:2}  Given a set of scores $(s_{n,t,k}, \, 1\leq n\leq N_t, \ 1 \leq t \leq T, \ 1 \leq k \leq K_t)$ where similarly as previously $N_t$ is the number of systems for which we have access to the score on task $t$, find a proper aggregation procedure. 
\end{problem_2_bis}

Thus the problem of instance level information aggregation boils down to finding $f^I$:
\begin{equation}
    f^I: \ \underbrace{\mathcal{S}^{1}_{N_1} \times \cdots \times \mathcal{S}^{K_1}_{N_1} \times  \cdots  \times \mathcal{S}^{k}_{N_t}  \times \cdots  \times  \mathcal{S}^{1}_{N_T} \times  \cdots \mathcal{S}^{K_T}_{N_T}  }_{T \sum\limits_{t} K_t \, times} \longrightarrow \mathfrak{S}_N.
\end{equation}
where $\mathcal{S}^{k}_{N_i} = (s_{n,t,k}, \, 1\leq n\leq N_i)$  is the set of score achieved by each system evaluated on the  task $t$ for the specific instance $k$. 
\begin{remark}
In the context of complete ranking, which is also a classical setting for benchmarking NLP systems and has been addressed in \cite{colombo2022what}, we have $N_t = N$ for all $t \in [1,T]$.
\end{remark}


\subsection{Handling Complete Scores in NLP System Evaluation} 
The literature relies on two main techniques for aggregating score information to benchmark machine learning systems: mean aggregation and ranking based aggregation.

\textbf{Mean aggregation ($\sigma^\mu$)} is the default choice for practitioners. At the task level $\sigma^\mu$ is defined as $\sigma^{\mu} = argsort\left(argsort\left[\frac{1}{T} \sum\limits_{1 \leq t \leq T} s_{n,t} \; \text{for} \; 1 \leq n \leq N \right]\right)$ and at the instance level $\sigma^{\mu}  = argsort\left(argsort\left[\frac{1}{T} \sum\limits_{1 \leq t \leq T}\frac{1}{K_t}\sum\limits_{1 \leq t \leq K_t} s_{n,t,k} \; \text{for} \; 1 \leq n \leq N \right]\right)$, where $argsort(\textbf{u})$ is the permutation that sorts the items in 
$\textbf{u}$. However, this approach has its limitations, particularly when evaluating tasks of different natures or using evaluation scores that are not on the same scale. Indeed in NLP, metric can have different ranges (or even be unbounded) and systems are evaluated based on diverse criteria such as quality, speed, or number of parameters. In such cases, conventional rescaling or normalization techniques may not sufficiently capture the inherent difficulty of each task.

\textbf{Ranking Based Aggregation} To address the challenges mentioned, researchers have proposed ranking-based aggregations \cite{peyrard2021better, colombo2022what}. These methods aggregate rankings instead of scores. In \cite{colombo2022what}, the authors tackle the problem of generating a ranking by aggregating rankings, utilizing the Borda count method (see \autoref{appendix:borda} for more details on Borda Count) known for its computational properties \cite{bartholdi1989computational, dwork2001rank,ali2012experiments}. Extending the Borda count method is not a straightforward task either. In the next section, we will present our aggregation procedure that can handle missing system score on a whole task.

\section{Ranking with missing system evaluation}

In this section, we will outline our methodology for ranking multiple systems in multi-task benchmarks, even if some systems have not been evaluated on one or more tasks. We use the ranking and ordering notation interchangeably. 

\subsection{Partial Rankings}

\textbf{Mapping Scores to Partial Rankings} To address the challenge of benchmarking with missing system evaluations, we propose a ranking-based approach that focuses on aggregating rankings rather than directly combining scores. Suppose we have a specific task $t$ with a task-level score denoted as $S_{N_t}$, or in the case of instance-level information, a task $t$ and instance $k$ with score $S^k_{N_t}$. In scenarios where there are missing evaluations at the task-level or instance-level, a \emph{partial ranking} of systems is generated. A partial ordering represents an incomplete ranking that includes only a subset of items from a larger set. We denote the partial ordering of systems as $\pi^{N_t} = (\pi_1 \succ \pi_2 \succ \ldots \succ \pi_{N_t})$ for the task-level scenario, and as $\pi^{N_t,k} = (\pi^k_1 \succ \pi^k_2 \succ \ldots \succ \pi^k_{N_t})$ for the instance-level scenario. Here, $\pi_i$ represents the $i$-th best system according to the set $S_{N_t}$ in the task-level scenario, while $\pi^k_i$ represents the $i$-th best system according to  $\pi^k$ in the instance-level scenario.

\textbf{Compatible Permutation} When working with partial rankings, it is necessary to construct a complete ranking that respects the order of the evaluated systems, i.e., a linear extension of the partial ranking. This is accomplished by creating a compatible permutation \cite{gessel2018shuffle}, which is a permutation of all systems consistent with the partial ranking. To construct a compatible permutation, we begin with the partial ranking of the evaluated systems and extend it to include the missing systems while maintaining the order of the evaluated systems. For example, let's consider a partial ordering $\pi_1 \succ \pi_2$ based on the evaluation of only these two systems. If there is an additional system that has not been evaluated, we can construct three compatible permutations: $\pi_3 \succ \pi_1 \succ \pi_2$, $\pi_1 \succ \pi_3 \succ \pi_2$ and $\pi_1 \succ \pi_2 \succ \pi_3$. These permutations ensure that the ordering of the evaluated systems is preserved while incorporating the missing system into the complete ranking.

\textbf{Why using a combinatorial approach?} Inputting missing data using compatible permutations enables us to leverage the widely used Borda method for aggregation, inheriting its theoretical and practical advantages. Unlike classical methods like harmonic Fourier analysis \cite{Kondor2008a,kondor2012multiresolution,clemenccon2011clustering} or multi-resolution analysis \cite{sibony2015mra}, our approach works, providing a distinct combinatorial solution for inputting missing data in partial rankings.


\subsection{Our Ranking Procedures: from scores to system ranking}

This section describes our algorithms for benchmarking when there are missing task evaluations for some systems. In summary, our method can be described in two steps:
\begin{tcolorbox}[sharp corners, colback=white, colframe=black, title=Our ranking procedure in a nutshell]
\begin{enumerate}[wide, labelwidth=!, labelindent=0pt]
    \item \textbf{Matrix Representation of the rankings (\autoref{ssec:mat}).} To harness the full potential of the available information in partial rankings, we \textbf{efficiently} generate all compatible permutations from the given partial rankings. 
    \item \textbf{Final System Ranking from Matrix Representation}. To obtain the final ranking of the systems, we propose a one-level ($\sigma^l$) approach (see \autoref{ssec:sigmal}) for both task-level and instance-level information and a two-level aggregation approach ($\sigma^{2l}$) for instance-level information (see \autoref{ssec:sigma2l}).
\end{enumerate}
\end{tcolorbox}

\subsubsection{Matrix representation of the rankings}\label{ssec:mat}
\textbf{Intuition}. The first step in our algorithm is to summarize the available information in all tasks and to input the missing information in a consistent manner. To do this, we use a matrix representation $M^{\pi}$ for each partial ranking $\pi$. This matrix decomposes the ranking information in pairwise variables, i.e., for every pair of systems $i,j$ there is a variable representing the probability that system $i$ outperforms system $j$.

\textbf{Why using matrix representation?} Using pairwise information has many advantages in ranking problems with missing data since it allows decomposing the total ranking information in $N(N-1)/2$ different variables. This decomposition has been used in statistical problems on partial and complete rankings~\cite{furnkranz2003pairwise,Lu2014a,Lu2014b,Shah2017}, for computing distances among partial rankings~\cite{Fagin2003}, clustering~\cite{Ailon2010} and classification~\cite{hullermeier2008label} among others. However, these problems consider specific forms of missing data such as top-$k$ rankings~\cite{Fagin2003} or bucket orderings~\cite{pmlr-v98-achab19a}. 
Our approach differs from the aforementioned literature in the fact that we input the missing data in a consistent manner in order to be able to deal with arbitrary missing data.

\textbf{Efficiently building $M^{\pi}$}. Let us consider a partial ranking $\pi $ and let  $M^{\pi}\in [0,1]^{N\times N}$ be its matrix representation. Matrix $M^{\pi}_{ij}$ denotes the proportion of complete rankings that are compatible with $\pi$ and satisfy the condition $i\succ j$, where $i$ and $j$ are distinct systems in the task. Formally, we can distinguish three cases:
\begin{enumerate}[wide, labelwidth=!, labelindent=0pt]
    \item \emph{if system $i$ is directly compared to system $j$ in $\pi$}. In this case, we set $M^{\pi}_{i,j} =0 \;if\; i\succ j \;else\; M^{\pi}_{i,j} =1$
    \item \emph{if no information is provided for either system $i$ or system $j$ in $\pi$}, meaning that both systems are unobserved in the partial ranking. In this case, $M^{\pi}_{i,j} =0.5$, which is the natural choice when no information is available.
    \item \emph{if we lack direct information about the comparison between system $i$ and $j$ in $\pi$ (one system was evaluated and the was not)}, we represent this situation by setting the corresponding matrix entry to the proportion of compatible permutations ranking system $i$ higher than system $j$ among the total number of compatible permutations (see \autoref{sec:proof_ranks}). 
\end{enumerate}

A naive algorithm for generating the matrix $M^{\pi}$ from $\pi$ would have factorial complexity and it is thus exorbitant in practice for relatively small number of systems, say $N>10$. \textbf{One of the contribution of our solution is to reduce the complexity to $O(n^3)$ by efficiently computing $p_{i,j}$}.  The closed-form expressions for $p_{i,j}$ as well as the proof for uniformity can be found in \autoref{sec:proof_ranks}.

\subsubsection{Final System Ranking from Matrix Representation: a one level approach ($\sigma^{l}$)}\label{ssec:sigmal}
\textbf{Intuition.} At this stage, we have demonstrated the construction of a matrix $M^{\pi}$ for a given partial ranking. However, in benchmarking scenarios, systems are typically evaluated on multiple tasks (in the case of task-level evaluation) or on multiple instances and tasks (in the case of instance-level evaluation). Consequently, it becomes necessary to combine multiple partial rankings. In this section, we will describe our approach for performing the one-level aggregation to address this requirement.

\textbf{Combining Multiple Partial Rankings for Benchmarking.} To combine the different matrices into a single matrix $M$ we sum over all the tasks (in the case of task-level information) or instances and tasks (in the case of instance-level information). Formally, this is achieved by performing the following operation to obtain the combined matrix $M^I=\sum\limits_{t \in [1,T]}\sum\limits_{k \in [1, K_t]}M^{\pi^{r_{t},k}}$, where $M^{\pi^{r_{t},k}}$ is represent the partial ranking induced on task $t$ and instance $k$. Similarly, for the task level we define $M^T = \sum\limits_{t \in [1,T]}M^{\pi_{r_{t}}}$  where $M^{\pi^{r_{t}}}$ represents the partial ranking induced on task $t$.

\textbf{Obtaining the final system ranking} In the final step, our goal is to obtain the final system ranking $\sigma^{l}$ based on the matrix $M^I$ or $M^T$. To achieve this, we use the Borda Count method, which involves computing the column-wise sum of the matrix and return the permutation that sorts the scores in increasing order. This step aligns with the approach proposed in \cite{colombo2022what}. Formally: 
\begin{equation}
\sigma^{l} = argsort\left(argsort\left[\sum_i M_{i,0}, \cdots, \sum_i M_{i,N}\right]\right).
\end{equation} Here, $M$ represents the matrix $M^T$ for task-level information, and $M^I$ for instance-level information.

\subsubsection{Final System Ranking from Matrix Representation: a two level approach ($\sigma^{2l}$)}\label{ssec:sigma2l}
\textbf{Intuition.} In the case of instance-level information, we also present a two-step procedure that draws inspiration from the widely adopted two-step mean aggregation approach.

\textbf{Procedure.} In the first step, we apply the task-level aggregation approach to generate individual rankings for each task $t$, resulting in $T$ different permutations. In the second step, we aggregate these multiple rankings using the Borda aggregation method. Formally $\sigma^{2l}$ can be computed as: 
\begin{enumerate}[wide, labelwidth=!, labelindent=0pt]
\item For each task $t$, compute $M^t = \sum\limits_{k \in [1, K_t]}M^{\pi^{r_{t},k}}$
    \item For each task $t$, compute $\sigma^{2l,t} = argsort\left(argsort\left[\sum\limits_i M^t_{i,0}, \cdots, \sum\limits_i M^t_{i,N}\right]\right).$
    \item Compute the Borda count aggregation $\sigma^{2l}$ of $[\sigma^{2l,1},\cdots,\sigma^{2l,t},\cdots,\sigma^{2l,T}]$.
\end{enumerate}

\subsection{Confidence Intervals for $\sigma^l$}
When evaluating systems with missing data, it is crucial to measure the uncertainty of partial rankings. In the previous section, we discussed combining partial rankings into a complete ranking. In this section, we analyze the confidence of our data regarding pairwise comparisons of system performance.

Under any ranking model such as as Mallows Model~\cite{gMallows} or Plackett-Luce~\cite{plackett1975analysis}, $M^{\pi}_{ij}$ are random variables of known expected value. What we compute in the previous section is the empirical value of it, $\widehat M^{\pi}_{ij}$ that approximates the true value $M^{\pi}_{ij}$. Here, we want to know how close these two quantities are. Formally, we are looking for a confidence interval of level $\delta$, that is the value for $c_{ij}$ around $\widehat M^{\pi}_{ij}$ that contains $M^{\pi}_{ij}$ with high probability, 
$P(| \widehat M^{\pi}_{ij} -M^{\pi}_{ij}| \geq c_{ij} ) \leq 1-\delta$. Noting that $0 \leq M^{\pi}_{ij} \leq 1$, we can use the Hoeffding inequality \cite{hoeffding1994probability} to compute the value of the confidence interval: 
\begin{equation}\label{eq:ocnfi}
    c_{ij} = \sqrt{\frac{-\log \delta}{2 z_{ij}} },
\end{equation}
where $z_{ij}$ is the number of times the systems have been compared. 

\textbf{Intuition:} to determine the significance of the difference in performance between system $i$ and $j$, we can compare $M_{ij}$ to 0.5. Thus, $i$ performs better than $j$ iff $M^{\pi}_{ij}>.5$. If the difference between $M_{ij}$ and 0.5 is small, the performance difference between the two systems may not be statistically significant, indicating that we cannot determine which system performs better than the other.

The confidence interval developed above says that the true parameter $M^{\pi}_{ij}$ is included in the interval $[\widehat M^{\pi}_{ij} -c_{ij} , \widehat M^{\pi}_{ij} +c_{ij}]$ with high probability. It follows that if 0.5 is not in this interval then we can say that one of systems is better than the other with high probability. Similar approaches have been proposed to find complete rankings and best ranked systems with high probability \cite{DBLP:conf/icml/Busa-FeketeHS14,szorenyi2015online}.

\subsection{Baseline methods}\label{ssec:baseline}
To date, there is no established method for benchmarking NLP systems in the presence of missing data. To compare our proposed algorithm to existing methods, we consider a baseline approach that ignores missing data and relies on mean aggregation. This approach has been used in previous studies \cite{pfeiffer-etal-2022-lifting,lin-etal-2022-shot,martin-etal-2020-camembert,guibon-etal-2021-shot,peng2019transfer}, and we will refer to it as $\sigma^\mu$ in our experiments.
\section{Synthetic Experiments}
\begin{figure}[t]
\centering
\begin{minipage}{.75\textwidth}
    \centering
    \subfloat[\centering GLUE]{{\includegraphics[width=0.25\textwidth]{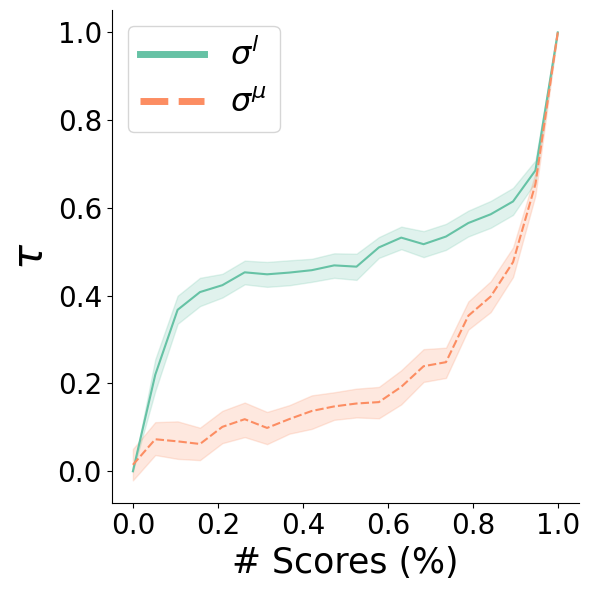} }}%
    \subfloat[\centering SGLUE]{{\includegraphics[width=0.25\textwidth]{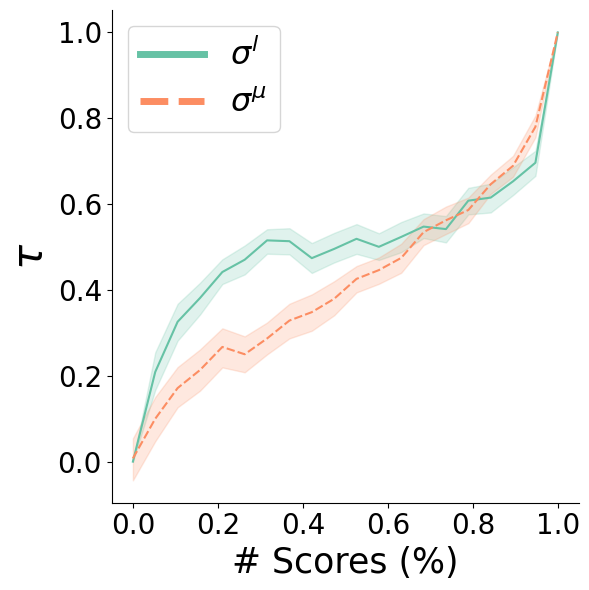} }}%
    \subfloat[\centering XTREME]{{\includegraphics[width=0.25\textwidth]{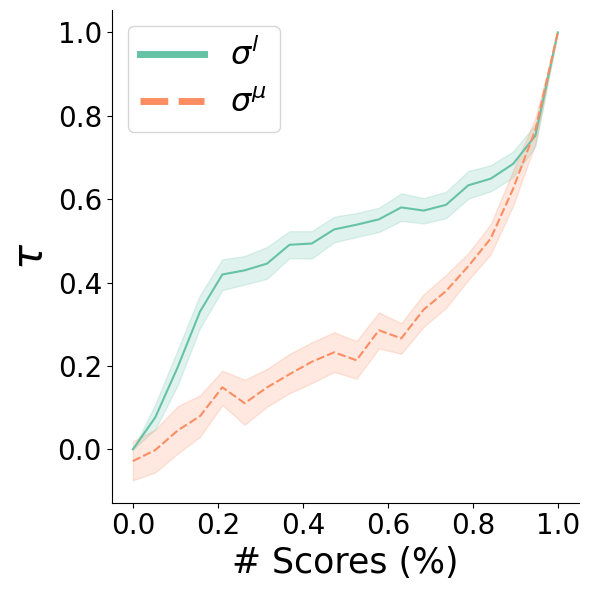} }}%
    \subfloat[\centering GEM]{{\includegraphics[width=0.25\textwidth]{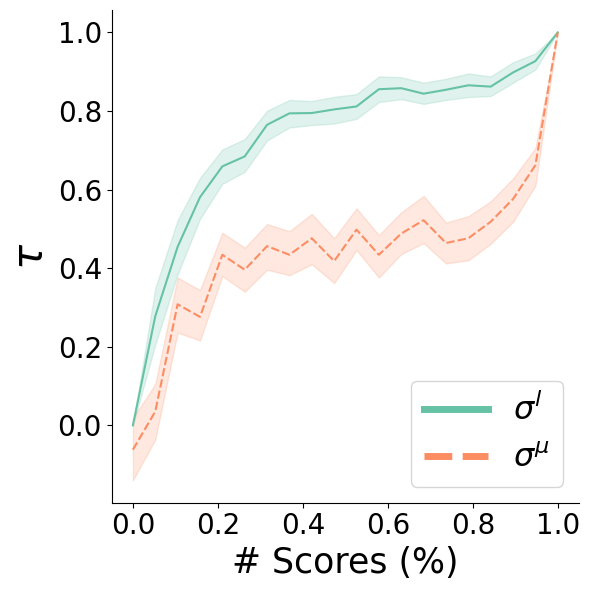}}}%
    \captionof{figure}{Task-Level Robustness Experiment. We compare the robustness of our method ${\sigma}^l$ with the mean aggregation method ${\sigma}^\mu$ by measuring the Kendall $\tau$ correlation coefficient between their respective rankings after removing a proportion $\eta$ of scores and by considering the whole scores.}%
    \label{fig:task-robust}
\end{minipage}\hfill
\begin{minipage}{.2\textwidth}
  \centering
    \subfloat[\centering Scaling.]{{\includegraphics[width=0.93\textwidth]{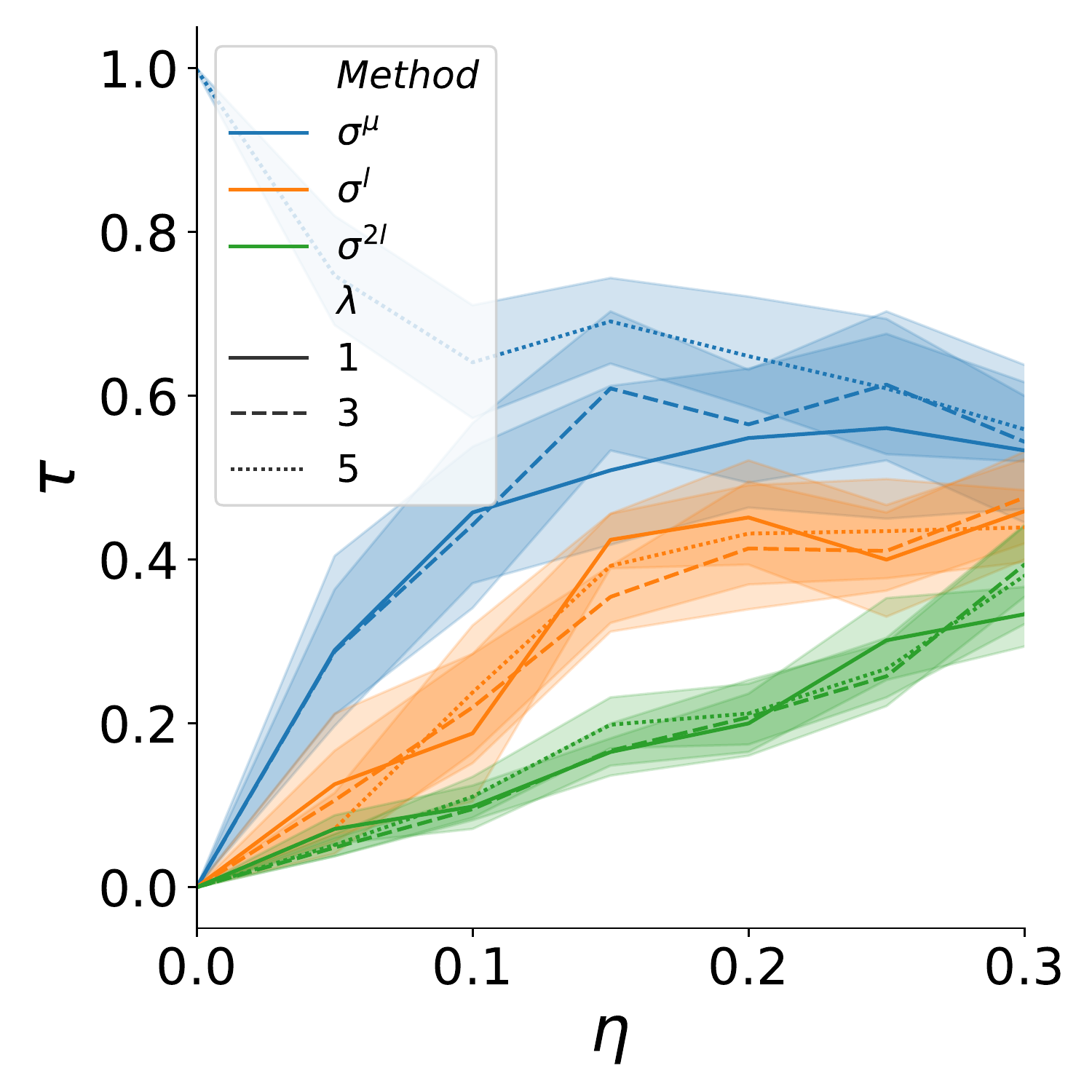} }}\caption{Rob. for missing data ($\eta$) and different scaling corruptions ($\lambda$).}
    \label{fig:figure_scale}%
\end{minipage}%
\vspace{-.6cm}
\end{figure}
\subsection{Data Generation}
The analysis of a toy experiment involves synthetic scores generated from $N = 20$ systems, $T = 20$ tasks, and $K = 20$ instances. Each system's performance is modeled by a Gumbel random variable $G_n$ with a center at $\phi \times n$ and a scale of $\beta=1$, where $\phi$ is a dispersion parameter between $0$ and $1$. The scores of each system, $s\left(n,t,k\right)$, are independent and identically distributed samples of $G_n$ centered at  $\phi \times n$ with a scale of $\beta=1$. Furthermore, the scores from different systems are sampled independently. Since the difference between $G_{n+1}$ and $G_n$ follows a logistic distribution with a mean of $\phi$ and a scale of 1, the probability that system $n + 1$ performs better than system n is at least $0.5$, i.e., $P\left(G_{n+1} - G_{n} > 0\right) \geq 0.5$. Thus, the ranking of systems for all k and t is a realization of the true ranking $[1,\cdots, N]$, with a noise term controlled by the dispersion parameter $\phi$. The extreme scenarios are $\phi=0$ and $\phi=1$, where $\phi=0$  means that all scores $s\left(n,t,k\right)$ have the same distribution, and $\phi=1$ results in a strong consensus and a clear system ranking. Unless specifically mentioned, each experiment is repeated 100 times for every data point.

\subsection{Robustness To Scaling}
In order to conduct a more detailed comparison of the ranking, we introduce a corruption in the scores of a specific task by rescaling them with a positive factor of $\lambda$. Although this corruption does not have any impact on our ranking process (since the ranking induced by a task-instance pair remains unchanged), it progressively disrupts the mean aggregation procedure as the value of $\lambda$ increases (see \autoref{fig:figure_scale} for detailed results). \emph{This experiment further validates the use of rankings in NLP benchmarking, as these metrics involve different natures of measurements (\textit{e.g.}, BLEU score vs. number of parameters or speed) and can have bounded or unbounded scales.}

\subsection{Pairwise Confidence Analysis}
\begin{minipage}{0.65\textwidth}
To determine the number of system comparisons required to achieve a desired confidence level of $\delta$, we use \autoref{eq:ocnfi}. \autoref{fig:figure_statistical} presents the results for two confidence levels ($\delta$). The graph illustrates the number of system pairs for which 0.5 is not within the confidence interval, plotted against the number of comparisons for different values of $m$ and $\phi$. As expected, when the rankings are more concentrated (\textit{i.e.}, when $\phi$ is closer to 1), fewer system comparisons are needed to achieve a high number of valid system comparisons. In real-world benchmarks, test sets usually contain more than 500 pairs.
\end{minipage}\hfill
\begin{minipage}{0.33\textwidth}
    \centering
    \includegraphics[width=\textwidth]{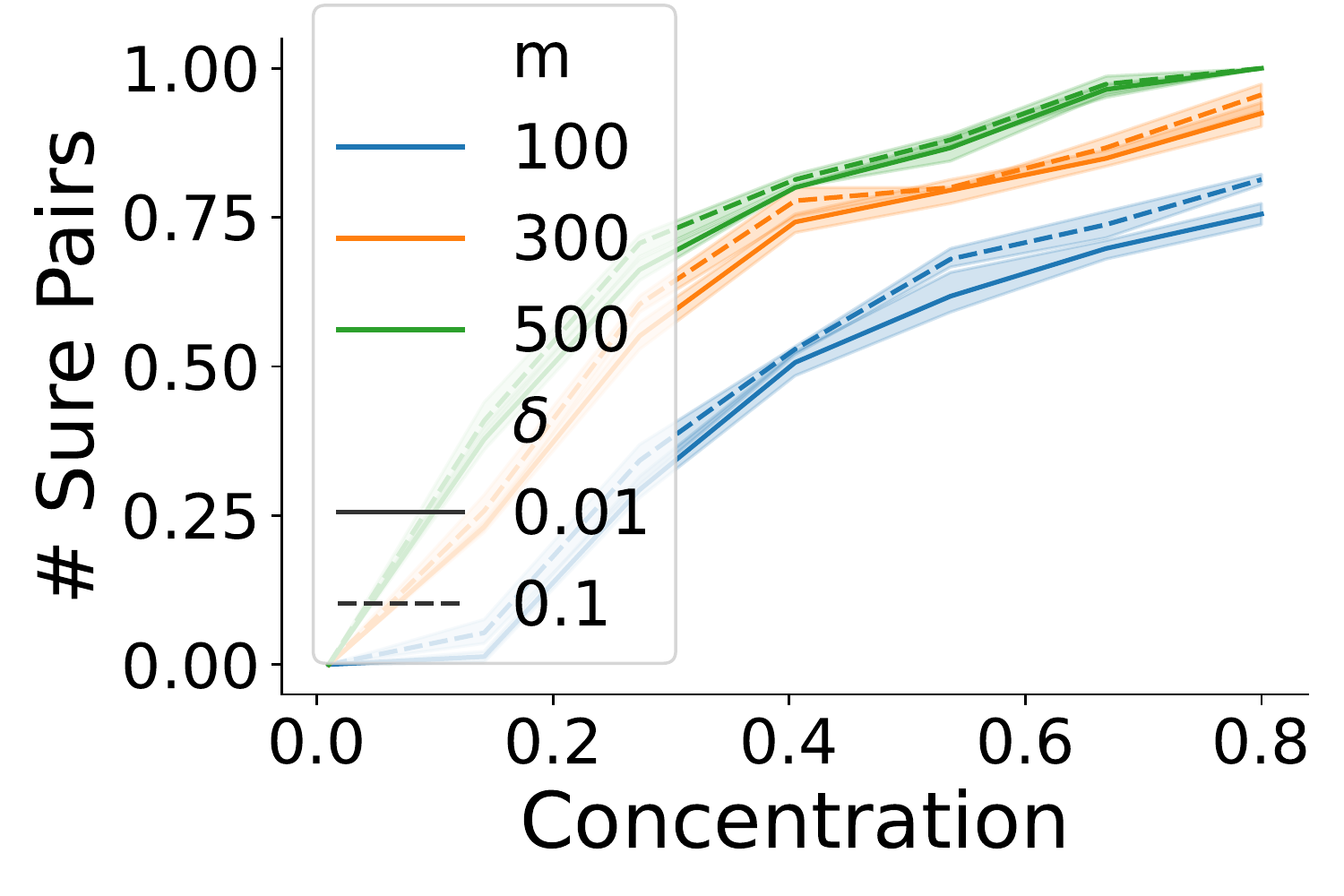}
    \captionof{figure}{Confidence analysis.}
    \label{fig:figure_statistical}
\end{minipage}

\section{Empirical Experiments}
\begin{figure}[t]
    \centering
    \subfloat[\centering Dial. PC]{{\includegraphics[width=0.15\textwidth]{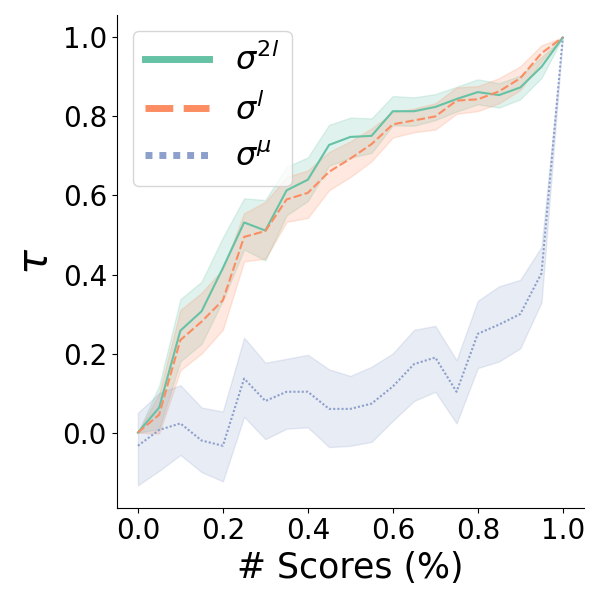} }}%
    \subfloat[\centering Flickr]{{\includegraphics[width=0.15\textwidth]{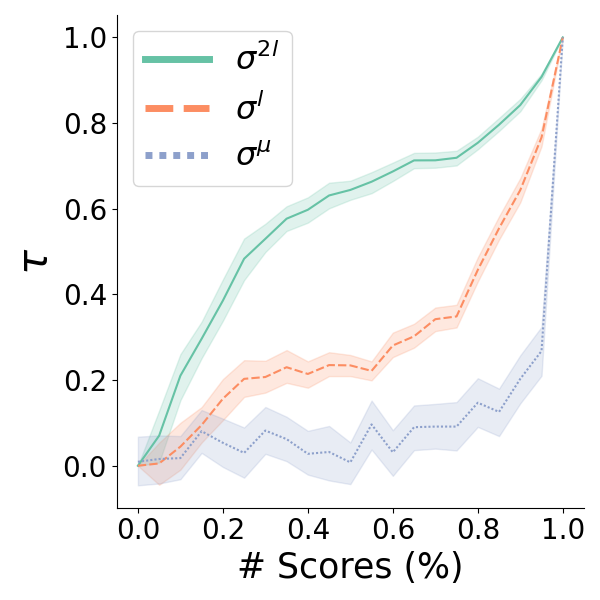} }}%
    \subfloat[\centering COCO]{{\includegraphics[width=0.15\textwidth]{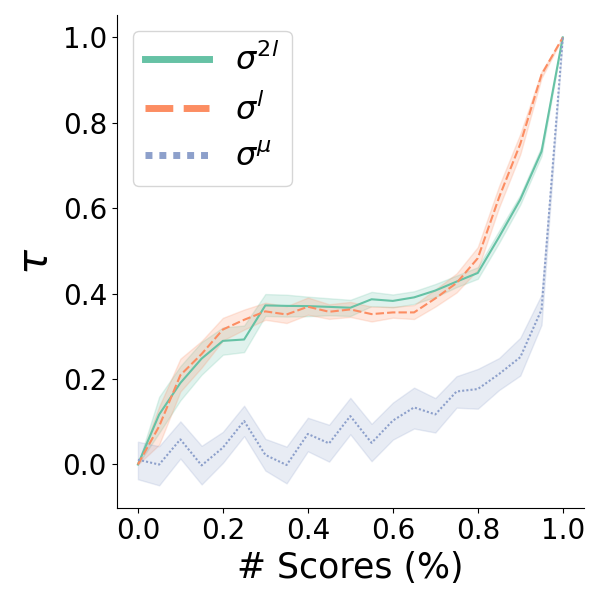} }}%
        \subfloat[\centering TAC09]{{\includegraphics[width=0.15\textwidth]{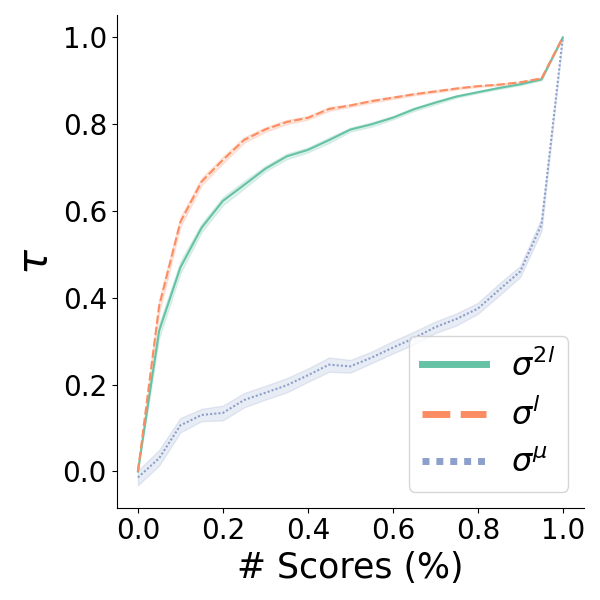} }}%
    \subfloat[\centering SummEval]{{\includegraphics[width=0.15\textwidth]{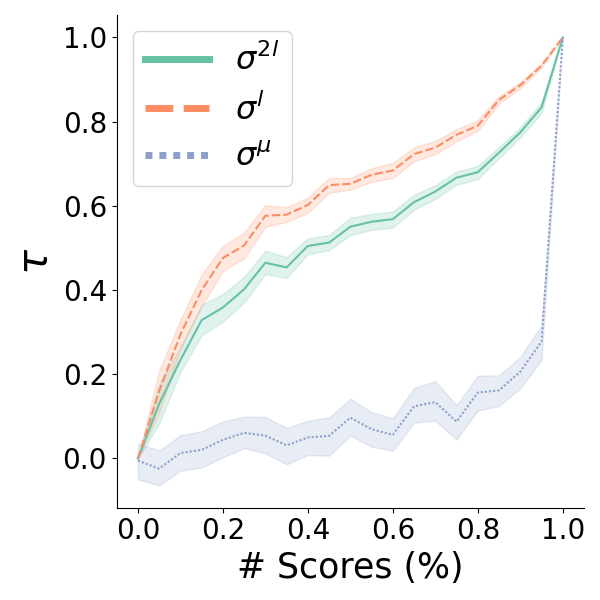} }} \\ \vspace{-.2cm}
    \subfloat[\centering WebNLG-en]{{\includegraphics[width=0.15\textwidth]{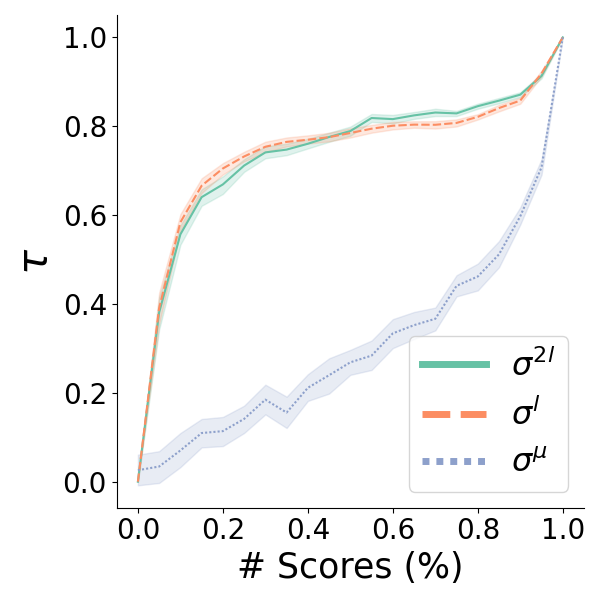} }}
        \subfloat[\centering WebNLG-ru]{{\includegraphics[width=0.15\textwidth]{images/two_level/two_level_WebNLG2020_rdf2text_en.png} }}
    \subfloat[\centering WMT20 (pl)]{{\includegraphics[width=0.15\textwidth]{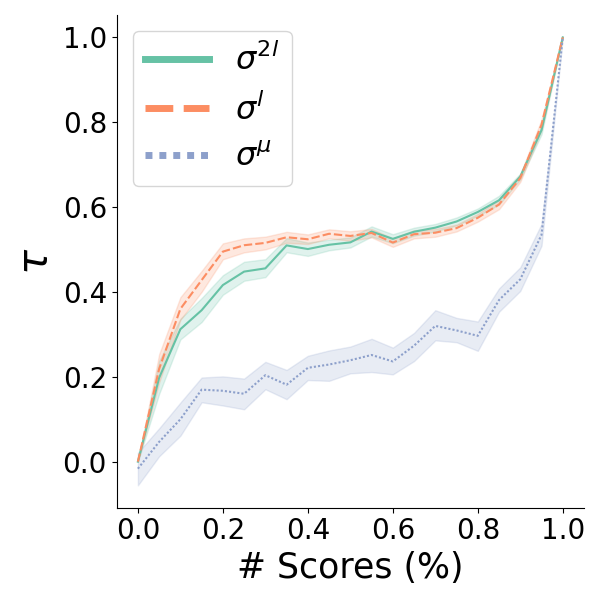} }}%
    \subfloat[\centering WMT21 (de)]{{\includegraphics[width=0.15\textwidth]{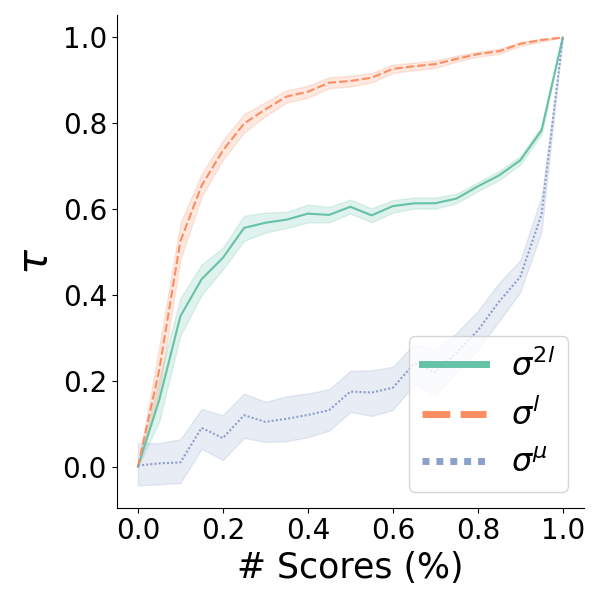}}}
        \subfloat[\centering WMT21 (ru)]{{\includegraphics[width=0.15\textwidth]{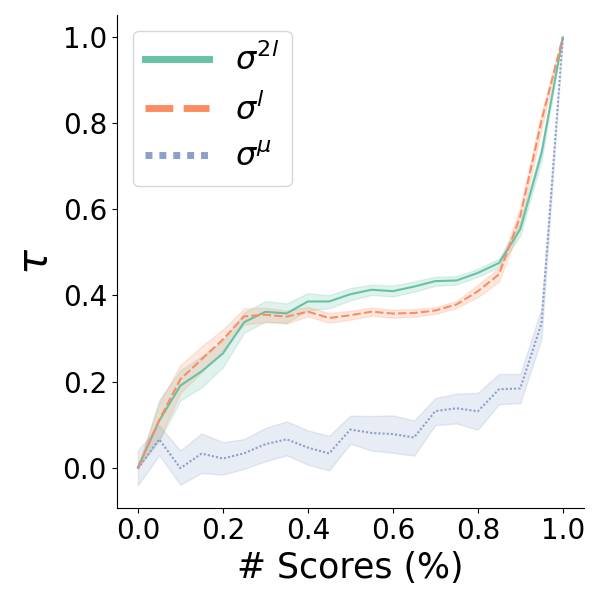} }}
    \caption{Instance-Level Robustness Experiment. We evaluate the robustness of our proposed aggregation methods, namely ${\sigma}^{2l}, {\sigma}^{l}$, and the mean aggregation method ${\sigma}^\mu$, by randomly removing a proportion $\eta$ of all instances on a specific task for a specific system.\vspace{-.3cm}}\label{fig:instance-robust-main}%
\end{figure}
In this section, we benchmark our methods on real rankings. We introduce a dataset with over 100 million scores, surpassing previous datasets by several orders of magnitude (see \autoref{extended-dataset} and \autoref{sec:additinonal_details_dataset_description}).

\subsection{A Comprehensive Collection of NLP System Scores}
\label{extended-dataset}
Our dataset builds upon the one used in \cite{colombo2022what} and includes two types of datasets: those with task-level information and those with instance-level information.

\textbf{Datasets with Task Level Information}  Our datasets are based on GLUE \cite{wang2018glue}, SGLUE \cite{wang2019superglue}, and XTREME \cite{hu2020xtreme}, which include tasks of varying natures such as accuracy, F1-score, and mean square errors. In addition, we collected data from the GEM Benchmark \cite{gehrmann2021gem}, which was an ideal use case for our methods as it encompasses missing data by design (as shown in Table 3 of \cite{gehrmann2021gem}) and includes evaluations of various natures such as lexical similarity, semantic equivalence, faithfulness evaluation, diversity, and system characterization (i.e., size of the vocabulary).

\textbf{Datasets with Instance Level Information} We did not use the data from [4] for the datasets with instance-level information because they did not provide the sentence and reference test required to add more evaluation metrics or more systems. Therefore, we collected all the data from scratch and extended the dataset in two ways. Firstly, we collected data from five distinct tasks - dialogue \cite{mehri2020usr}, image description \cite{young2014image}, summary evaluation \cite{dang2008overview,owczarzak2011overview,bhandari2020re,fabbri2021summeval}, data-to-text \cite{gardent2017webnlg,zhou2020webnlg}, and translation \cite{ranasinghe2021exploratory}. For the translation part, we added datasets from WMT15 \cite{stanojevic2015results}, WMT16 \cite{bojar2016findings}, WMT17 \cite{bojar2017findings}, WMT18 \cite{rej2018findings}, WMT19 \cite{barrault2019findings}, WMT20 \cite{loic2020findings}, and WMT21 \cite{farhad2021findings} in several languages such as en, ru, ts, and others. Secondly, we expanded the set of used metrics from 10 to 17, including Rouge \cite{lin2004rouge}, JS \cite{lin2006information}, Bleu \cite{papineni2002bleu}, Chrfpp \cite{popovic2017chrf++}, BERTScore \cite{zhang2019bertscore}, MoverScore \cite{zhao2019moverscore}, Baryscore \cite{colombo2021automatic}, DepthScore \cite{staerman2021depth}, Infolm \cite{colombo2021infolm}, CharErrorRate \cite{morris2004}, ExtendedEditDistance \cite{stanchev-etal-2019-eed}, MatchErrorRate, TranslationEditRate \cite{snover2006ter}, WordErrorRate \cite{ali2018word}, WordInfoLost \cite{morris2004and}, Bleurt \cite{sellam2020bleurt}, and Comet \cite{rei2022comet,rei2020comet}. 
Overall, our benchmark grew \emph{from 250K scores to over 131 M score}. \emph{This extensive data work is one of the core contributions of this paper, and we believe it will be valuable for future research.}

\subsection{Task-Level Benchmarking in Real-World Scenarios} 
\label{subsec:task-robust}
In this section, we explore aggregating missing data with task-level information. First, we test the robustness of our proposed method (${\sigma}^l$) against the mean aggregation method (${\sigma}^\mu$) and then we quantify the difference between the two output rankings.
\textbf{${\sigma}^l$ is more robust than ${\sigma}^\mu$.} To compare the effectiveness of aggregation methods in handling missing values on real data, we randomly remove a proportion $\eta$ of the task-level data and measure robustness by computing the Kendall $\tau$ between the rankings of the systems obtained by considering the scores with and without missing values. From \autoref{fig:task-robust}, we observe two extreme cases: when no systems are removed (\textit{i.e.}, $\eta = 0$), the aggregation methods output the same value as the one obtained with the full ranking and $\tau = 1$. At the other extreme, when all missing values are removed (\textit{i.e.}, $\eta = 1$), a total absence of correlation can be observed. \\
Overall, we find that ${\sigma}^l$ achieves a higher correlation, with a large improvement of more than 10 points compared to other methods, especially in the medium corruption regime (\textit{i.e.,} $0.05 \geq \eta \geq 0.4$), which is more commonly encountered in practical scenarios. These results demonstrate that, on average, the rankings remain more stable when using our proposed method.

\textbf{${\sigma}^l$ outputs a different ranking than ${\sigma}^\mu$.} We evaluated the correlation between different rankings obtained in the robustness experiment depicted in \autoref{fig:task-robust}. Specifically, we compared the rankings produced by ${\sigma}^{l}$ and ${\sigma}^{\mu}$ in \autoref{tab:aggreement} and found a weak correlation between the two rankings, indicating that they produce different rankings. This is further supported by the results presented in \autoref{tab:percentage}, 

\begin{minipage}{0.4\textwidth}
which measure the percentage of times that the top 1 and top 3 rankings differ when considering the 2k rankings generated in the robustness experiment. \emph{These results demonstrate that in addition to being more robust, our ranking procedure produces different conclusions when benchmarking systems in the presence of missing tasks.}
\end{minipage}\hspace{8pt}\begin{minipage}{0.25\textwidth}
\centering
    \begin{tabular}{cc}\hline
     & $\tau_{{\sigma}^{l} \leftrightarrow {\sigma}^{\mu}}$ \\\hline
       GLUE  &  0.17 \scriptsize{$\pm 0.24$}\\
       SGLUE  &  0.33 \scriptsize{$\pm 0.27$}\\
       XTREM   &  0.26 \scriptsize{$\pm 0.26$}\\
       GEM   &  0.36 \scriptsize{$\pm 0.36$}\\\hline
    \end{tabular}
    \captionof{table}{Agreement measured by Kendall $\tau$ correlation.}\label{tab:aggreement} 
\end{minipage}\hspace{5pt}
\begin{minipage}{0.3\textwidth}
\begin{tabular}{ccc}\hline
    Dataset &  top 1 & top 3\\\hline
    GEM & 0.52 & 0.25 \\
    SGLUE & 0.20 & 0.15 \\
    GLUE & 0.10 & 0.07 \\
    XTREM & 0.19 & 0.09\\
        \hline
    \end{tabular}
    \captionof{table}{Percentage of times the top 1 and top 3 systems are the same between ${\sigma}^{l}$ and ${\sigma}^{\mu}$.}
    \label{tab:percentage}
\end{minipage}

\subsection{Instance-Level Benchmarking in Real-World Scenarios}\label{ssec:exp_task}
In this section, we evaluate the robustness of our proposed aggregation methods, ${\sigma}^{2l}$, ${\sigma}^l$, and the baseline ${\sigma}^\mu$, in the presence of missing data. We also compare the rankings obtained from different algorithms on our large benchmark dataset, comprising over 132 million scores.

\textbf{${\sigma}^{2l}$ and ${\sigma}^{l}$ are more robust than ${\sigma}^\mu$.} Similarly to the previous robustness experiment, we randomly remove a proportion $\eta$ of scores by discarding all instances of a specific task. \emph{The goal of this missing value sampling is to simulate how missing scores may occur when certain systems are not evaluated on specific tasks}. For each method, \autoref{fig:instance-robust-main} reports the $\tau$ correlation coefficient between the ranking obtained with missing values and the ranking obtained with complete scores. 

\begin{minipage}{0.4\textwidth}
\textbf{Both ${\sigma}^{2l}$ and ${\sigma}^{l}$ produce highly correlated rankings, while being different from ${\sigma}^\mu$.} We conducted a replication of the agreement analysis presented in \autoref{ssec:exp_task} and present the findings in \autoref{tab:aggrement_instan} and \autoref{tab:aggrement_2}. Our results align
\end{minipage}\hspace{8pt}\begin{minipage}{0.25\textwidth}
\centering
    \begin{tabular}{cc}\hline
     & Corr. \\\hline
       $\tau_{{\sigma}^{2l} \leftrightarrow {\sigma}^{l}}$  &  0.80 \scriptsize{$\pm 0.22$}\\
       $\tau_{{\sigma}^{l} \leftrightarrow {\sigma}^{\mu}}$  &  0.20 \scriptsize{$\pm 0.28$} \\
       $\tau_{{\sigma}^{\mu} \leftrightarrow {\sigma}^{2l}}$   &   0.19 \scriptsize{$\pm 0.28$} \\\hline
    \end{tabular}
    \captionof{table}{Agreement.}\label{tab:aggrement_instan}
\end{minipage}\hspace{3pt}\begin{minipage}{0.32\textwidth}
    \begin{tabular}{ccc}\hline
  &  Top 1 & Top 3\\\hline
       ${\sigma}^{2l} \text{ vs } {\sigma}^{l}$  &  0.67  & 0.36 \\
       ${\sigma}^{l} \text{ vs } {\sigma}^{\mu}$  &  0.21  & 0.09 \\
       ${\sigma}^{\mu} \text{ vs } {\sigma}^{2l}$   &  0.19 & 0.09 \\ \hline
    \end{tabular}
    \captionof{table}{Top 1 and 3 analysis.}
    \label{tab:aggrement_2}
\end{minipage} with those of our previous experiments, demonstrating that both of our ranking-based procedures (${\sigma}^{2l}$ and ${\sigma}^{l}$) are more robust in the presence of missing data and yield different rankings than ${\sigma}^{\mu}$. 


\subsection{Statistical Analysis}
\label{statistical-analysis}
\textbf{Confidence interval for practitioners.} The confidence interval is valuable for 
\begin{minipage}{0.63\textwidth}
informing additional comparisons between systems $i$ and $j$. A narrow interval indicates a reliable comparison, while a wider interval suggests more uncertainty and the need for additional comparisons across tasks. For example, in \autoref{fig:confidence-main}, we report the results of applying $\sigma^l$ on WMT en-de with a confidence level of $\delta=0.1$. Green value in position $i<j$ illustrate that system $0.5 \not\in[\widehat M^{\pi}_{ij} -c_{ij} , \widehat M^{\pi}_{ij} +c_{ij}]$ and $i\succ j$ with high probability. The scale of green displays the distance between 0.5 and the CI, so the greener the more $i\succ j$. The results reveal distinct blocks where top systems (\textit{i.e.,} 9,1,16,15) significantly outperform others with high confidence. Near the diagonal, the elements indicate relatively closer performance of the systems. These findings demonstrate that the confidence interval analysis provides insights into the relative performance of systems.
\end{minipage}\hfill
\begin{minipage}{0.35\textwidth}
 \centering
\includegraphics[width=\textwidth]{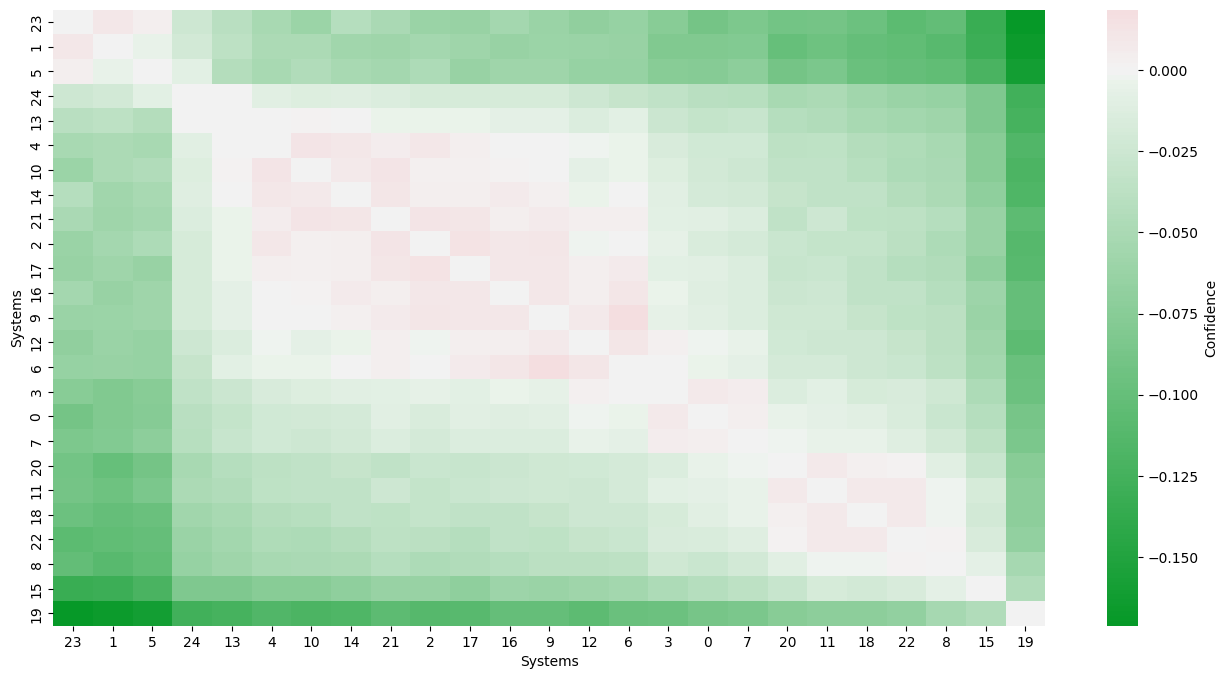}
\captionof{figure}{Confidence interval analysis on WMT en-de for a corruption level of $\eta = 0.2$ and a confidence level $\delta = 0.01$. The final ranking can be seen on the x axis: left to right is best to worst}
        \label{fig:confidence-main}
\end{minipage}
\vspace{-.2cm}
\section{Conclusions and Future Research Directions}\vspace{-.3cm}
Our study sheds light on the limitations of the conventional mean-aggregation approach, particularly when dealing with missing data. To address this issue, we propose a novel statistical perspective and aggregation procedures that are both robust and grounded in social choice theory. We introduce two alternative methods: the one-level aggregation method ($\sigma^l$) stands out as the most robust approach. Furthermore, $\sigma^l$ allows to compute confidence intervals, enabling a more refined statistical analysis. By offering a more reliable and robust means of comparing different systems, our contribution equips practitioners, especially in the NLP field where large pre-trained models are expected to exhibit strong generalization across diverse tasks.

\section{Acknowledgements}
 This work was also granted access to the HPC resources of IDRIS under the allocation 2021- AP010611665 as well as under the project 2021- 101838 made by GENCI.

\bibliographystyle{abbrv}
\bibliography{bib}

\begin{thebibliography}{100}

\bibitem{pmlr-v98-achab19a}
M.~Achab, A.~Korba, and S.~Clémençon.
\newblock Dimensionality reduction and (bucket) ranking: a mass transportation
  approach.
\newblock volume~98, pages 64--93. PMLR, 2019.

\bibitem{Ailon2010}
N.~Ailon.
\newblock Aggregation of partial rankings, p-ratings and top-m lists.
\newblock {\em Algorithmica (New York)}, 2010.

\bibitem{ali2012experiments}
A.~Ali and M.~Meil{\u{a}}.
\newblock Experiments with kemeny ranking: What works when?
\newblock {\em Mathematical Social Sciences}, 64(1):28--40, 2012.

\bibitem{ali2018word}
A.~Ali and S.~Renals.
\newblock Word error rate estimation for speech recognition: e-wer.
\newblock In {\em Proceedings of the 56th Annual Meeting of the Association for
  Computational Linguistics (Volume 2: Short Papers)}, pages 20--24, 2018.

\bibitem{aribandi2021ext5}
V.~Aribandi, Y.~Tay, T.~Schuster, J.~Rao, H.~S. Zheng, S.~V. Mehta, H.~Zhuang,
  V.~Q. Tran, D.~Bahri, J.~Ni, et~al.
\newblock Ext5: Towards extreme multi-task scaling for transfer learning.
\newblock {\em arXiv preprint arXiv:2111.10952}, 2021.

\bibitem{artetxe-etal-2022-corpus}
M.~Artetxe, I.~Aldabe, R.~Agerri, O.~Perez-de Vi{\~n}aspre, and A.~Soroa.
\newblock Does corpus quality really matter for low-resource languages?
\newblock In {\em Proceedings of the 2022 Conference on Empirical Methods in
  Natural Language Processing}, pages 7383--7390, Abu Dhabi, United Arab
  Emirates, Dec. 2022. Association for Computational Linguistics.

\bibitem{squad_2}
M.~Artetxe, S.~Ruder, and D.~Yogatama.
\newblock On the cross-lingual transferability of monolingual representations.
\newblock {\em arXiv preprint arXiv:1910.11856}, 2019.

\bibitem{artetxe2019massively}
M.~Artetxe and H.~Schwenk.
\newblock Massively multilingual sentence embeddings for zero-shot
  cross-lingual transfer and beyond.
\newblock {\em Transactions of the Association for Computational Linguistics},
  7:597--610, 2019.

\bibitem{barrault2019findings}
L.~Barrault, O.~Bojar, M.~R. Costa-Jussa, C.~Federmann, M.~Fishel, Y.~Graham,
  B.~Haddow, M.~Huck, P.~Koehn, S.~Malmasi, et~al.
\newblock Findings of the 2019 conference on machine translation (wmt19).
\newblock ACL, 2019.

\bibitem{bartholdi1989computational}
J.~J. Bartholdi, C.~A. Tovey, and M.~A. Trick.
\newblock The computational difficulty of manipulating an election.
\newblock {\em Social Choice and Welfare}, 6(3):227--241, 1989.

\bibitem{Bayer.Diaconis-1992}
D.~Bayer and P.~Diaconis.
\newblock Trailing the dovetail shuffle to its lair.
\newblock {\em The Annals of Applied Probability}, 2:294--313, 1992.

\bibitem{bentivogli2009fifth}
L.~Bentivogli, P.~Clark, I.~Dagan, and D.~Giampiccolo.
\newblock The fifth pascal recognizing textual entailment challenge.
\newblock In {\em TAC}, 2009.

\bibitem{bhandari2020re}
M.~Bhandari, P.~Gour, A.~Ashfaq, P.~Liu, and G.~Neubig.
\newblock Re-evaluating evaluation in text summarization.
\newblock {\em arXiv preprint arXiv:2010.07100}, 2020.

\bibitem{bojar2017findings}
O.~Bojar, R.~Chatterjee, C.~Federmann, Y.~Graham, B.~Haddow, S.~Huang, M.~Huck,
  P.~Koehn, Q.~Liu, V.~Logacheva, et~al.
\newblock Findings of the 2017 conference on machine translation (wmt17).
\newblock Association for Computational Linguistics, 2017.

\bibitem{bojar2016findings}
O.~Bojar, R.~Chatterjee, C.~Federmann, Y.~Graham, B.~Haddow, M.~Huck, A.~J.
  Yepes, P.~Koehn, V.~Logacheva, C.~Monz, et~al.
\newblock Findings of the 2016 conference on machine translation.
\newblock In {\em Proceedings of the First Conference on Machine Translation:
  Volume 2, Shared Task Papers}, pages 131--198, 2016.

\bibitem{bommasani2021opportunities}
R.~Bommasani, D.~A. Hudson, E.~Adeli, R.~Altman, S.~Arora, S.~von Arx, M.~S.
  Bernstein, J.~Bohg, A.~Bosselut, E.~Brunskill, et~al.
\newblock On the opportunities and risks of foundation models.
\newblock {\em arXiv preprint arXiv:2108.07258}, 2021.

\bibitem{brown2020language}
T.~Brown, B.~Mann, N.~Ryder, M.~Subbiah, J.~D. Kaplan, P.~Dhariwal,
  A.~Neelakantan, P.~Shyam, G.~Sastry, A.~Askell, et~al.
\newblock Language models are few-shot learners.
\newblock {\em Advances in neural information processing systems},
  33:1877--1901, 2020.

\bibitem{DBLP:conf/icml/Busa-FeketeHS14}
R.~Busa-Fekete, E.~H{\"{u}}llermeier, and B.~Sz{\"{o}}r{\'{e}}nyi.
\newblock {Preference-Based Rank Elicitation using Statistical Models: The Case
  of Mallows}.
\newblock In {\em Proceedings of the 31th International Conference on Machine
  Learning, (ICML)}, pages 1071--1079, 2014.

\bibitem{cer2017semeval}
D.~Cer, M.~Diab, E.~Agirre, I.~Lopez-Gazpio, and L.~Specia.
\newblock Semeval-2017 task 1: Semantic textual similarity-multilingual and
  cross-lingual focused evaluation.
\newblock {\em arXiv preprint arXiv:1708.00055}, 2017.

\bibitem{chapuis2021code}
E.~Chapuis, P.~Colombo, M.~Labeau, and C.~Clavel.
\newblock Code-switched inspired losses for generic spoken dialog
  representations.
\newblock {\em arXiv preprint arXiv:2108.12465}, 2021.

\bibitem{chapuis2020hierarchical}
E.~Chapuis, P.~Colombo, M.~Manica, M.~Labeau, and C.~Clavel.
\newblock Hierarchical pre-training for sequence labelling in spoken dialog.
\newblock {\em arXiv preprint arXiv:2009.11152}, 2020.

\bibitem{DBLP:conf/coling/ChhunCSC22}
C.~Chhun, P.~Colombo, F.~M. Suchanek, and C.~Clavel.
\newblock Of human criteria and automatic metrics: {A} benchmark of the
  evaluation of story generation.
\newblock In N.~Calzolari, C.~Huang, H.~Kim, J.~Pustejovsky, L.~Wanner,
  K.~Choi, P.~Ryu, H.~Chen, L.~Donatelli, H.~Ji, S.~Kurohashi, P.~Paggio,
  N.~Xue, S.~Kim, Y.~Hahm, Z.~He, T.~K. Lee, E.~Santus, F.~Bond, and S.~Na,
  editors, {\em Proceedings of the 29th International Conference on
  Computational Linguistics, {COLING} 2022, Gyeongju, Republic of Korea,
  October 12-17, 2022}, pages 5794--5836. International Committee on
  Computational Linguistics, 2022.

\bibitem{clark2019boolq}
C.~Clark, K.~Lee, M.-W. Chang, T.~Kwiatkowski, M.~Collins, and K.~Toutanova.
\newblock Boolq: Exploring the surprising difficulty of natural yes/no
  questions.
\newblock {\em arXiv preprint arXiv:1905.10044}, 2019.

\bibitem{clark2020tydi}
J.~H. Clark, E.~Choi, M.~Collins, D.~Garrette, T.~Kwiatkowski, V.~Nikolaev, and
  J.~Palomaki.
\newblock Tydi qa: A benchmark for information-seeking question answering in
  typologically diverse languages.
\newblock {\em Transactions of the Association for Computational Linguistics},
  8:454--470, 2020.

\bibitem{clemenccon2011clustering}
S.~Cl{\'e}men{\c{c}}on, R.~Gaudel, and J.~Jakubowicz.
\newblock Clustering rankings in the fourier domain.
\newblock In {\em Machine Learning and Knowledge Discovery in Databases:
  European Conference, ECML PKDD 2011, Athens, Greece, September 5-9, 2011.
  Proceedings, Part I 11}, pages 343--358. Springer, 2011.

\bibitem{DBLP:phd/hal/Colombo21}
P.~Colombo.
\newblock {\em Apprendre {\`{a}} repr{\'{e}}senter et {\`{a}}
  g{\'{e}}n{\'{e}}rer du texte en utilisant des mesures d'information.
  (Learning to represent and generate text using information measures)}.
\newblock PhD thesis, Polytechnic Institute of Paris, France, 2021.

\bibitem{colombo2021learning}
P.~Colombo.
\newblock {\em Learning to represent and generate text using information
  measures}.
\newblock PhD thesis, Institut polytechnique de Paris, 2021.

\bibitem{colombo2020guiding}
P.~Colombo, E.~Chapuis, M.~Manica, E.~Vignon, G.~Varni, and C.~Clavel.
\newblock Guiding attention in sequence-to-sequence models for dialogue act
  prediction.
\newblock In {\em Proceedings of the AAAI Conference on Artificial
  Intelligence}, volume~34, pages 7594--7601, 2020.

\bibitem{colombo2021infolm}
P.~Colombo, C.~Clave, and P.~Piantanida.
\newblock Infolm: A new metric to evaluate summarization \& data2text
  generation.
\newblock {\em arXiv preprint arXiv:2112.01589}, 2021.

\bibitem{colombo2021novel}
P.~Colombo, C.~Clavel, and P.~Piantanida.
\newblock A novel estimator of mutual information for learning to disentangle
  textual representations.
\newblock {\em arXiv preprint arXiv:2105.02685}, 2021.

\bibitem{colombo2022what}
P.~Colombo, N.~Noiry, E.~Irurozki, and S.~CLEMENCON.
\newblock What are the best systems? new perspectives on {NLP} benchmarking.
\newblock In A.~H. Oh, A.~Agarwal, D.~Belgrave, and K.~Cho, editors, {\em
  Advances in Neural Information Processing Systems}, 2022.

\bibitem{DBLP:conf/nips/ColomboNIC22}
P.~Colombo, N.~Noiry, E.~Irurozki, and S.~Cl{\'{e}}men{\c{c}}on.
\newblock What are the best systems? new perspectives on {NLP} benchmarking.
\newblock In {\em NeurIPS}, 2022.

\bibitem{colombo2022glass}
P.~Colombo, M.~Peyrard, N.~Noiry, R.~West, and P.~Piantanida.
\newblock The glass ceiling of automatic evaluation in natural language
  generation.
\newblock {\em arXiv preprint arXiv:2208.14585}, 2022.

\bibitem{colombo2021automatic}
P.~Colombo, G.~Staerman, C.~Clavel, and P.~Piantanida.
\newblock Automatic text evaluation through the lens of wasserstein
  barycenters.
\newblock {\em arXiv preprint arXiv:2108.12463}, 2021.

\bibitem{DBLP:conf/acl/ColomboSNP22}
P.~Colombo, G.~Staerman, N.~Noiry, and P.~Piantanida.
\newblock Learning disentangled textual representations via statistical
  measures of similarity.
\newblock In S.~Muresan, P.~Nakov, and A.~Villavicencio, editors, {\em
  Proceedings of the 60th Annual Meeting of the Association for Computational
  Linguistics (Volume 1: Long Papers), {ACL} 2022, Dublin, Ireland, May 22-27,
  2022}, pages 2614--2630. Association for Computational Linguistics, 2022.

\bibitem{colombo2019affect}
P.~Colombo, W.~Witon, A.~Modi, J.~Kennedy, and M.~Kapadia.
\newblock Affect-driven dialog generation.
\newblock {\em arXiv preprint arXiv:1904.02793}, 2019.

\bibitem{colombo2021beam}
P.~Colombo, C.~Yang, G.~Varni, and C.~Clavel.
\newblock Beam search with bidirectional strategies for neural response
  generation.
\newblock {\em arXiv preprint arXiv:2110.03389}, 2021.

\bibitem{xnli}
A.~Conneau, G.~Lample, R.~Rinott, A.~Williams, S.~R. Bowman, H.~Schwenk, and
  V.~Stoyanov.
\newblock Xnli: Evaluating cross-lingual sentence representations.
\newblock {\em arXiv preprint arXiv:1809.05053}, 2018.

\bibitem{dagan2005pascal}
I.~Dagan, O.~Glickman, and B.~Magnini.
\newblock The pascal recognising textual entailment challenge.
\newblock In {\em Machine Learning Challenges Workshop}, pages 177--190.
  Springer, 2005.

\bibitem{dang2008overview}
H.~T. Dang, K.~Owczarzak, et~al.
\newblock Overview of the tac 2008 update summarization task.
\newblock In {\em TAC}, 2008.

\bibitem{DBLP:journals/corr/abs-2212-09171}
M.~Darrin, P.~Piantanida, and P.~Colombo.
\newblock Rainproof: An umbrella to shield text generators from
  out-of-distribution data.
\newblock {\em CoRR}, abs/2212.09171, 2022.

\bibitem{DBLP:journals/corr/abs-2302-09852}
M.~Darrin, G.~Staerman, E.~D.~C. Gomes, J.~C. Cheung, P.~Piantanida, and
  P.~Colombo.
\newblock Unsupervised layer-wise score aggregation for textual {OOD}
  detection.
\newblock {\em CoRR}, abs/2302.09852, 2023.

\bibitem{de2019commitmentbank}
M.-C. De~Marneffe, M.~Simons, and J.~Tonhauser.
\newblock The commitmentbank: Investigating projection in naturally occurring
  discourse.
\newblock In {\em proceedings of Sinn und Bedeutung}, volume~23, pages
  107--124, 2019.

\bibitem{benchmark_lottery}
M.~Dehghani, Y.~Tay, A.~A. Gritsenko, Z.~Zhao, N.~Houlsby, F.~Diaz, D.~Metzler,
  and O.~Vinyals.
\newblock The benchmark lottery.
\newblock {\em arXiv preprint arXiv:2107.07002}, 2021.

\bibitem{dinkar2020importance}
T.~Dinkar, P.~Colombo, M.~Labeau, and C.~Clavel.
\newblock The importance of fillers for text representations of speech
  transcripts.
\newblock {\em arXiv preprint arXiv:2009.11340}, 2020.

\bibitem{dolan2005automatically}
W.~B. Dolan and C.~Brockett.
\newblock Automatically constructing a corpus of sentential paraphrases.
\newblock In {\em Proceedings of the Third International Workshop on
  Paraphrasing (IWP2005)}, 2005.

\bibitem{dwork2001rank}
C.~Dwork, R.~Kumar, M.~Naor, and D.~Sivakumar.
\newblock Rank aggregation methods for the web.
\newblock In {\em Proceedings of the 10th international conference on World
  Wide Web}, pages 613--622, 2001.

\bibitem{fabbri2021summeval}
A.~R. Fabbri, W.~Kry{\'s}ci{\'n}ski, B.~McCann, C.~Xiong, R.~Socher, and
  D.~Radev.
\newblock Summeval: Re-evaluating summarization evaluation.
\newblock {\em Transactions of the Association for Computational Linguistics},
  9:391--409, 2021.

\bibitem{Fagin2003}
R.~Fagin, R.~Kumar, and D.~Sivakumar.
\newblock Comparing top k lists.
\newblock {\em SIAM Journal on Discrete Mathematics}, 2003.

\bibitem{fan2021beyond}
A.~Fan, S.~Bhosale, H.~Schwenk, Z.~Ma, A.~El-Kishky, S.~Goyal, M.~Baines,
  O.~Celebi, G.~Wenzek, V.~Chaudhary, et~al.
\newblock Beyond english-centric multilingual machine translation.
\newblock {\em The Journal of Machine Learning Research}, 22(1):4839--4886,
  2021.

\bibitem{farhad2021findings}
A.~Farhad, A.~Arkady, B.~Magdalena, B.~Ond{\v{r}}ej, C.~Rajen, C.~Vishrav,
  M.~R. Costa-jussa, E.-B. Cristina, F.~Angela, F.~Christian, et~al.
\newblock Findings of the 2021 conference on machine translation (wmt21).
\newblock In {\em Proceedings of the Sixth Conference on Machine Translation},
  pages 1--88. Association for Computational Linguistics, 2021.

\bibitem{gMallows}
M.~A. Fligner and J.~S. Verducci.
\newblock {Distance based ranking models}.
\newblock {\em Journal of the Royal Statistical Society}, 48(3):359--369, 1986.

\bibitem{furnkranz2003pairwise}
J.~Fürnkranz and E.~Hüllermeier.
\newblock Pairwise preference learning and ranking.
\newblock pages 145--156, 2003.

\bibitem{garcia2019token}
A.~Garcia, P.~Colombo, S.~Essid, F.~d'Alch{\'e} Buc, and C.~Clavel.
\newblock From the token to the review: A hierarchical multimodal approach to
  opinion mining.
\newblock {\em arXiv preprint arXiv:1908.11216}, 2019.

\bibitem{gardent2017webnlg}
C.~Gardent, A.~Shimorina, S.~Narayan, and L.~Perez-Beltrachini.
\newblock The webnlg challenge: Generating text from rdf data.
\newblock In {\em Proceedings of the 10th International Conference on Natural
  Language Generation}, pages 124--133, 2017.

\bibitem{gehrmann2021gem}
S.~Gehrmann, T.~Adewumi, K.~Aggarwal, P.~S. Ammanamanchi, A.~Anuoluwapo,
  A.~Bosselut, K.~R. Chandu, M.~Clinciu, D.~Das, K.~D. Dhole, et~al.
\newblock The gem benchmark: Natural language generation, its evaluation and
  metrics.
\newblock {\em arXiv preprint arXiv:2102.01672}, 2021.

\bibitem{gehrmann2022gemv2}
S.~Gehrmann, A.~Bhattacharjee, A.~Mahendiran, A.~Wang, A.~Papangelis,
  A.~Madaan, A.~McMillan-Major, A.~Shvets, A.~Upadhyay, B.~Yao, et~al.
\newblock Gemv2: Multilingual nlg benchmarking in a single line of code.
\newblock {\em arXiv preprint arXiv:2206.11249}, 2022.

\bibitem{gehrmann2022repairing}
S.~Gehrmann, E.~Clark, and T.~Sellam.
\newblock Repairing the cracked foundation: A survey of obstacles in evaluation
  practices for generated text.
\newblock {\em arXiv preprint arXiv:2202.06935}, 2022.

\bibitem{gessel2018shuffle}
I.~M. Gessel and Y.~Zhuang.
\newblock Shuffle-compatible permutation statistics.
\newblock {\em Advances in Mathematics}, 332:85--141, 2018.

\bibitem{giampiccolo2007third}
D.~Giampiccolo, B.~Magnini, I.~Dagan, and W.~B. Dolan.
\newblock The third pascal recognizing textual entailment challenge.
\newblock In {\em Proceedings of the ACL-PASCAL workshop on textual entailment
  and paraphrasing}, pages 1--9, 2007.

\bibitem{guibon-etal-2021-shot}
G.~Guibon, M.~Labeau, H.~Flamein, L.~Lefeuvre, and C.~Clavel.
\newblock Few-shot emotion recognition in conversation with sequential
  prototypical networks.
\newblock In {\em Proceedings of the 2021 Conference on Empirical Methods in
  Natural Language Processing}, pages 6858--6870, Online and Punta Cana,
  Dominican Republic, Nov. 2021. Association for Computational Linguistics.

\bibitem{NIPS2006_f44ee263}
R.~Herbrich, T.~Minka, and T.~Graepel.
\newblock Trueskill\texttrademark : A bayesian skill rating system.
\newblock In B.~Sch\"{o}lkopf, J.~Platt, and T.~Hoffman, editors, {\em Advances
  in Neural Information Processing Systems}, volume~19. MIT Press, 2006.

\bibitem{hoeffding1994probability}
W.~Hoeffding.
\newblock Probability inequalities for sums of bounded random variables.
\newblock {\em The collected works of Wassily Hoeffding}, pages 409--426, 1994.

\bibitem{hu2020xtreme}
J.~Hu, S.~Ruder, A.~Siddhant, G.~Neubig, O.~Firat, and M.~Johnson.
\newblock Xtreme: A massively multilingual multi-task benchmark for evaluating
  cross-lingual generalisation.
\newblock In {\em International Conference on Machine Learning}, pages
  4411--4421. PMLR, 2020.

\bibitem{hupkes2022state}
D.~Hupkes, M.~Giulianelli, V.~Dankers, M.~Artetxe, Y.~Elazar, T.~Pimentel,
  C.~Christodoulopoulos, K.~Lasri, N.~Saphra, A.~Sinclair, et~al.
\newblock State-of-the-art generalisation research in nlp: a taxonomy and
  review.
\newblock {\em arXiv preprint arXiv:2210.03050}, 2022.

\bibitem{hullermeier2008label}
E.~Hüllermeier, J.~Fürnkranz, W.~Cheng, and K.~Brinker.
\newblock Label ranking by learning pairwise preferences.
\newblock {\em Artificial Intelligence}, 172:1897--1916, 2008.

\bibitem{jalalzai2020heavy}
H.~Jalalzai, P.~Colombo, C.~Clavel, {\'E}.~Gaussier, G.~Varni, E.~Vignon, and
  A.~Sabourin.
\newblock Heavy-tailed representations, text polarity classification \& data
  augmentation.
\newblock {\em Advances in Neural Information Processing Systems},
  33:4295--4307, 2020.

\bibitem{khashabi2018looking}
D.~Khashabi, S.~Chaturvedi, M.~Roth, S.~Upadhyay, and D.~Roth.
\newblock Looking beyond the surface: A challenge set for reading comprehension
  over multiple sentences.
\newblock In {\em Proceedings of the 2018 Conference of the North American
  Chapter of the Association for Computational Linguistics: Human Language
  Technologies, Volume 1 (Long Papers)}, pages 252--262, 2018.

\bibitem{Knuth}
D.~E. Knuth.
\newblock Permutations, matrices and generalized young tableaux.
\newblock {\em Pacific Journal of Mathematics}, 34:709--727, 1970.

\bibitem{kocoń2023chatgpt}
J.~Kocoń, I.~Cichecki, O.~Kaszyca, M.~Kochanek, D.~Szydło, J.~Baran,
  J.~Bielaniewicz, M.~Gruza, A.~Janz, K.~Kanclerz, A.~Kocoń, B.~Koptyra,
  W.~Mieleszczenko-Kowszewicz, P.~Miłkowski, M.~Oleksy, M.~Piasecki, Łukasz
  Radliński, K.~Wojtasik, S.~Woźniak, and P.~Kazienko.
\newblock Chatgpt: Jack of all trades, master of none, 2023.

\bibitem{kocon2023chatgpt}
J.~Kocoń, I.~Cichecki, O.~Kaszyca, M.~Kochanek, D.~Szydło, J.~Baran,
  J.~Bielaniewicz, M.~Gruza, A.~Janz, K.~Kanclerz, A.~Kocoń, B.~Koptyra,
  W.~Mieleszczenko-Kowszewicz, P.~Miłkowski, M.~Oleksy, M.~Piasecki, Łukasz
  Radliński, K.~Wojtasik, S.~Woźniak, and P.~Kazienko.
\newblock Chatgpt: Jack of all trades, master of none, 2023.

\bibitem{Kondor2008a}
R.~Kondor and M.~Barbosa.
\newblock Ranking with kernels in fourier space.
\newblock 2010.

\bibitem{kondor2012multiresolution}
R.~Kondor and W.~Dempsey.
\newblock Multiresolution analysis on the symmetric group.
\newblock {\em Advances in Neural Information Processing Systems}, 25, 2012.

\bibitem{lehman2023need}
E.~Lehman, E.~Hernandez, D.~Mahajan, J.~Wulff, M.~J. Smith, Z.~Ziegler,
  D.~Nadler, P.~Szolovits, A.~Johnson, and E.~Alsentzer.
\newblock Do we still need clinical language models?, 2023.

\bibitem{levesque2012winograd}
H.~Levesque, E.~Davis, and L.~Morgenstern.
\newblock The winograd schema challenge.
\newblock In {\em Thirteenth International Conference on the Principles of
  Knowledge Representation and Reasoning}, 2012.

\bibitem{lewis2019mlqa}
P.~Lewis, B.~O{\u{g}}uz, R.~Rinott, S.~Riedel, and H.~Schwenk.
\newblock Mlqa: Evaluating cross-lingual extractive question answering.
\newblock {\em arXiv preprint arXiv:1910.07475}, 2019.

\bibitem{lin2004rouge}
C.-Y. Lin.
\newblock Rouge: A package for automatic evaluation of summaries.
\newblock In {\em Text summarization branches out}, pages 74--81, 2004.

\bibitem{lin2006information}
C.-Y. Lin, G.~Cao, J.~Gao, and J.-Y. Nie.
\newblock An information-theoretic approach to automatic evaluation of
  summaries.
\newblock In {\em Proceedings of the Human Language Technology Conference of
  the NAACL, Main Conference}, pages 463--470, 2006.

\bibitem{lin-etal-2022-shot}
X.~V. Lin, T.~Mihaylov, M.~Artetxe, T.~Wang, S.~Chen, D.~Simig, M.~Ott,
  N.~Goyal, S.~Bhosale, J.~Du, R.~Pasunuru, S.~Shleifer, P.~S. Koura,
  V.~Chaudhary, B.~O{'}Horo, J.~Wang, L.~Zettlemoyer, Z.~Kozareva, M.~Diab,
  V.~Stoyanov, and X.~Li.
\newblock Few-shot learning with multilingual generative language models.
\newblock In {\em Proceedings of the 2022 Conference on Empirical Methods in
  Natural Language Processing}, pages 9019--9052, Abu Dhabi, United Arab
  Emirates, Dec. 2022. Association for Computational Linguistics.

\bibitem{liu2019roberta}
Y.~Liu, M.~Ott, N.~Goyal, J.~Du, M.~Joshi, D.~Chen, O.~Levy, M.~Lewis,
  L.~Zettlemoyer, and V.~Stoyanov.
\newblock Roberta: A robustly optimized bert pretraining approach.
\newblock {\em arXiv preprint arXiv:1907.11692}, 2019.

\bibitem{loic2020findings}
B.~Lo{\"\i}c, B.~Magdalena, B.~Ond{\v{r}}ej, F.~Christian, G.~Yvette, G.~Roman,
  H.~Barry, H.~Matthias, J.~Eric, K.~Tom, et~al.
\newblock Findings of the 2020 conference on machine translation (wmt20).
\newblock In {\em Proceedings of the Fifth Conference on Machine Translation},
  pages 1--55. Association for Computational Linguistics,, 2020.

\bibitem{Lu2014a}
T.~Lu and C.~Boutilier.
\newblock Effective sampling and learning for mallows models with
  pairwise-preference data.
\newblock {\em Journal of Machine Learning Research}, 2014.

\bibitem{Lu2014b}
T.~Lu and C.~Boutilier.
\newblock Effective sampling and learning for mallows models with
  pairwise-preference data.
\newblock {\em Journal of Machine Learning Research}, 2014.

\bibitem{martin-etal-2020-camembert}
L.~Martin, B.~Muller, P.~J. Ortiz~Su{\'a}rez, Y.~Dupont, L.~Romary,
  {\'E}.~de~la Clergerie, D.~Seddah, and B.~Sagot.
\newblock {C}amem{BERT}: a tasty {F}rench language model.
\newblock In {\em Proceedings of the 58th Annual Meeting of the Association for
  Computational Linguistics}, pages 7203--7219, Online, July 2020. Association
  for Computational Linguistics.

\bibitem{mehri2020usr}
S.~Mehri and M.~Eskenazi.
\newblock Usr: An unsupervised and reference free evaluation metric for dialog
  generation.
\newblock {\em arXiv preprint arXiv:2005.00456}, 2020.

\bibitem{morris2004}
A.~Morris, V.~Maier, and P.~Green.
\newblock From wer and ril to mer and wil: improved evaluation measures for
  connected speech recognition.
\newblock 01 2004.

\bibitem{morris2004and}
A.~C. Morris, V.~Maier, and P.~Green.
\newblock From wer and ril to mer and wil: improved evaluation measures for
  connected speech recognition.
\newblock In {\em Eighth International Conference on Spoken Language
  Processing}, 2004.

\bibitem{ng2015better}
J.-P. Ng and V.~Abrecht.
\newblock Better summarization evaluation with word embeddings for rouge.
\newblock {\em arXiv preprint arXiv:1508.06034}, 2015.

\bibitem{universal_2}
J.~Nivre, M.~Abrams, Z.~Agic, L.~Ahrenberg, L.~Antonsen, et~al.
\newblock Universal dependencies 2.2 (2018).
\newblock {\em URL http://hdl. handle. net/11234/1-1983xxx. LINDAT/CLARIN
  digital library at the Institute of Formal and Applied Linguistics, Charles
  University, Prague, http://hdl. handle. net/11234/1-1983xxx}, 2018.

\bibitem{novikova2018rankme}
J.~Novikova, O.~Du{\v{s}}ek, and V.~Rieser.
\newblock Rankme: Reliable human ratings for natural language generation.
\newblock {\em arXiv preprint arXiv:1803.05928}, 2018.

\bibitem{openai2023gpt4}
OpenAI.
\newblock Gpt-4 technical report, 2023.

\bibitem{owczarzak2011overview}
K.~Owczarzak and H.~T. Dang.
\newblock Overview of the tac 2011 summarization track: Guided task and aesop
  task.
\newblock In {\em Proceedings of the Text Analysis Conference (TAC 2011),
  Gaithersburg, Maryland, USA, November}, 2011.

\bibitem{pos}
X.~Pan, B.~Zhang, J.~May, J.~Nothman, K.~Knight, and H.~Ji.
\newblock Cross-lingual name tagging and linking for 282 languages.
\newblock In {\em Proceedings of the 55th Annual Meeting of the Association for
  Computational Linguistics (Volume 1: Long Papers)}, pages 1946--1958, 2017.

\bibitem{papineni2002bleu}
K.~Papineni, S.~Roukos, T.~Ward, and W.-J. Zhu.
\newblock Bleu: a method for automatic evaluation of machine translation.
\newblock In {\em Proceedings of the 40th annual meeting of the Association for
  Computational Linguistics}, pages 311--318, 2002.

\bibitem{peng2019transfer}
Y.~Peng, S.~Yan, and Z.~Lu.
\newblock Transfer learning in biomedical natural language processing: An
  evaluation of {BERT} and {ELM}o on ten benchmarking datasets.
\newblock In {\em Proceedings of the 18th BioNLP Workshop and Shared Task},
  pages 58--65, Florence, Italy, Aug. 2019. Association for Computational
  Linguistics.

\bibitem{peyrard2017learning}
M.~Peyrard, T.~Botschen, and I.~Gurevych.
\newblock Learning to score system summaries for better content selection
  evaluation.
\newblock In {\em Proceedings of the Workshop on New Frontiers in
  Summarization}, pages 74--84, 2017.

\bibitem{peyrard2021better}
M.~Peyrard, W.~Zhao, S.~Eger, and R.~West.
\newblock Better than average: Paired evaluation of nlp systems.
\newblock {\em arXiv preprint arXiv:2110.10746}, 2021.

\bibitem{pfeiffer-etal-2022-lifting}
J.~Pfeiffer, N.~Goyal, X.~Lin, X.~Li, J.~Cross, S.~Riedel, and M.~Artetxe.
\newblock Lifting the curse of multilinguality by pre-training modular
  transformers.
\newblock In {\em Proceedings of the 2022 Conference of the North American
  Chapter of the Association for Computational Linguistics: Human Language
  Technologies}, pages 3479--3495, Seattle, United States, July 2022.
  Association for Computational Linguistics.

\bibitem{pilehvar2018wic}
M.~T. Pilehvar and J.~Camacho-Collados.
\newblock Wic: the word-in-context dataset for evaluating context-sensitive
  meaning representations.
\newblock {\em arXiv preprint arXiv:1808.09121}, 2018.

\bibitem{plackett1975analysis}
R.~L. Plackett.
\newblock The analysis of permutations.
\newblock {\em Journal of the Royal Statistical Society: Series C (Applied
  Statistics)}, 24(2):193--202, 1975.

\bibitem{popovic2017chrf++}
M.~Popovi{\'c}.
\newblock chrf++: words helping character n-grams.
\newblock In {\em Proceedings of the second conference on machine translation},
  pages 612--618, 2017.

\bibitem{post-2018-call}
M.~Post.
\newblock A call for clarity in reporting {BLEU} scores.
\newblock In {\em Proceedings of the Third Conference on Machine Translation:
  Research Papers}, pages 186--191, Belgium, Brussels, Oct. 2018. Association
  for Computational Linguistics.

\bibitem{raffel2020exploring}
C.~Raffel, N.~Shazeer, A.~Roberts, K.~Lee, S.~Narang, M.~Matena, Y.~Zhou,
  W.~Li, and P.~J. Liu.
\newblock Exploring the limits of transfer learning with a unified text-to-text
  transformer.
\newblock {\em The Journal of Machine Learning Research}, 21(1):5485--5551,
  2020.

\bibitem{pos_2}
A.~Rahimi, Y.~Li, and T.~Cohn.
\newblock Massively multilingual transfer for ner.
\newblock {\em arXiv preprint arXiv:1902.00193}, 2019.

\bibitem{rajpurkar2016squad}
P.~Rajpurkar, J.~Zhang, K.~Lopyrev, and P.~Liang.
\newblock Squad: 100,000+ questions for machine comprehension of text.
\newblock {\em arXiv preprint arXiv:1606.05250}, 2016.

\bibitem{ranasinghe2021exploratory}
T.~Ranasinghe, C.~Orasan, and R.~Mitkov.
\newblock An exploratory analysis of multilingual word-level quality estimation
  with cross-lingual transformers.
\newblock {\em arXiv preprint arXiv:2106.00143}, 2021.

\bibitem{rei2022comet}
R.~Rei, J.~G. de~Souza, D.~Alves, C.~Zerva, A.~C. Farinha, T.~Glushkova,
  A.~Lavie, L.~Coheur, and A.~F. Martins.
\newblock Comet-22: Unbabel-ist 2022 submission for the metrics shared task.
\newblock In {\em Proceedings of the Seventh Conference on Machine Translation
  (WMT)}, pages 578--585, 2022.

\bibitem{rei2020comet}
R.~Rei, C.~Stewart, A.~C. Farinha, and A.~Lavie.
\newblock Comet: A neural framework for mt evaluation.
\newblock {\em arXiv preprint arXiv:2009.09025}, 2020.

\bibitem{reid-artetxe-2022-paradise}
M.~Reid and M.~Artetxe.
\newblock {PARADISE}: Exploiting parallel data for multilingual
  sequence-to-sequence pretraining.
\newblock In {\em Proceedings of the 2022 Conference of the North American
  Chapter of the Association for Computational Linguistics: Human Language
  Technologies}, pages 800--810, Seattle, United States, July 2022. Association
  for Computational Linguistics.

\bibitem{rej2018findings}
O.~rej Bojar, C.~Federmann, M.~Fishel, Y.~Graham, B.~Haddow, M.~Huck, P.~Koehn,
  and C.~Monz.
\newblock Findings of the 2018 conference on machine translation (wmt18).
\newblock In {\em Proceedings of the Third Conference on Machine Translation},
  volume~2, pages 272--307, 2018.

\bibitem{roemmele2011choice}
M.~Roemmele, C.~A. Bejan, and A.~S. Gordon.
\newblock Choice of plausible alternatives: An evaluation of commonsense causal
  reasoning.
\newblock In {\em 2011 AAAI Spring Symposium Series}, 2011.

\bibitem{ruder2021benchmarking}
S.~Ruder.
\newblock {Challenges and Opportunities in NLP Benchmarking}.
\newblock \url{http://ruder.io/nlp-benchmarking}, 2021.

\bibitem{sedoc-ungar-2020-item}
J.~Sedoc and L.~Ungar.
\newblock Item response theory for efficient human evaluation of chatbots.
\newblock In {\em Proceedings of the First Workshop on Evaluation and
  Comparison of NLP Systems}, pages 21--33, Online, Nov. 2020. Association for
  Computational Linguistics.

\bibitem{sellam2020bleurt}
T.~Sellam, D.~Das, and A.~P. Parikh.
\newblock Bleurt: Learning robust metrics for text generation.
\newblock {\em arXiv preprint arXiv:2004.04696}, 2020.

\bibitem{Shah2017}
N.~B. Shah, S.~Balakrishnan, A.~Guntuboyina, and M.~J. Wainwright.
\newblock Stochastically transitive models for pairwise comparisons:
  Statistical and computational issues.
\newblock {\em IEEE Transactions on Information Theory}, 2017.

\bibitem{sibony2015mra}
E.~Sibony, S.~Cl{\'e}men{\c{c}}on, and J.~Jakubowicz.
\newblock Mra-based statistical learning from incomplete rankings.
\newblock In {\em International Conference on Machine Learning}, pages
  1432--1441. PMLR, 2015.

\bibitem{snover2006ter}
M.~Snover, B.~Dorr, R.~Schwartz, L.~Micciulla, and J.~Makhoul.
\newblock A study of translation edit rate with targeted human annotation.
\newblock In {\em Proceedings of association for machine translation in the
  Americas}, number~6. Cambridge, MA, 2006.

\bibitem{socher2013recursive}
R.~Socher, A.~Perelygin, J.~Wu, J.~Chuang, C.~D. Manning, A.~Y. Ng, and
  C.~Potts.
\newblock Recursive deep models for semantic compositionality over a sentiment
  treebank.
\newblock In {\em Proceedings of the 2013 conference on empirical methods in
  natural language processing}, pages 1631--1642, 2013.

\bibitem{srivastava2022beyond}
A.~Srivastava, A.~Rastogi, A.~Rao, A.~A.~M. Shoeb, A.~Abid, A.~Fisch, A.~R.
  Brown, A.~Santoro, A.~Gupta, A.~Garriga-Alonso, et~al.
\newblock Beyond the imitation game: Quantifying and extrapolating the
  capabilities of language models.
\newblock {\em arXiv preprint arXiv:2206.04615}, 2022.

\bibitem{staerman2021depth}
G.~Staerman, P.~Mozharovskyi, S.~Cl{\'e}men{\c{c}}on, and F.~d'Alch{\'e} Buc.
\newblock Depth-based pseudo-metrics between probability distributions.
\newblock {\em arXiv e-prints}, pages arXiv--2103, 2021.

\bibitem{stanchev-etal-2019-eed}
P.~Stanchev, W.~Wang, and H.~Ney.
\newblock {EED}: Extended edit distance measure for machine translation.
\newblock In {\em Proceedings of the Fourth Conference on Machine Translation
  (Volume 2: Shared Task Papers, Day 1)}, pages 514--520, Florence, Italy, Aug.
  2019. Association for Computational Linguistics.

\bibitem{Stanley1986}
R.~P. Stanley.
\newblock {\em Enumerative Combinatorics}.
\newblock Wadsworth Publishing Company, 1986.

\bibitem{stanojevic2015results}
M.~Stanojevi{\'c}, A.~Kamran, P.~Koehn, and O.~Bojar.
\newblock Results of the wmt15 metrics shared task.
\newblock In {\em Proceedings of the Tenth Workshop on Statistical Machine
  Translation}, pages 256--273, 2015.

\bibitem{szorenyi2015online}
B.~Sz{\"o}r{\'e}nyi, R.~Busa-Fekete, A.~Paul, and E.~H{\"u}llermeier.
\newblock Online rank elicitation for plackett-luce: A dueling bandits
  approach.
\newblock {\em Advances in Neural Information Processing Systems}, 28, 2015.

\bibitem{wang2019superglue}
A.~Wang, Y.~Pruksachatkun, N.~Nangia, A.~Singh, J.~Michael, F.~Hill, O.~Levy,
  and S.~R. Bowman.
\newblock Superglue: A stickier benchmark for general-purpose language
  understanding systems.
\newblock {\em arXiv preprint arXiv:1905.00537}, 2019.

\bibitem{wang2018glue}
A.~Wang, A.~Singh, J.~Michael, F.~Hill, O.~Levy, and S.~R. Bowman.
\newblock Glue: A multi-task benchmark and analysis platform for natural
  language understanding.
\newblock {\em arXiv preprint arXiv:1804.07461}, 2018.

\bibitem{warstadt2019neural}
A.~Warstadt, A.~Singh, and S.~R. Bowman.
\newblock Neural network acceptability judgments.
\newblock {\em Transactions of the Association for Computational Linguistics},
  7:625--641, 2019.

\bibitem{Wilf1999}
H.~S. Wilf.
\newblock {\em East Side, West Side . . . - an introduction to combinatorial
  families-with Maple programming}.
\newblock arxiv, 1999.

\bibitem{williams2017broad}
A.~Williams, N.~Nangia, and S.~R. Bowman.
\newblock A broad-coverage challenge corpus for sentence understanding through
  inference.
\newblock {\em arXiv preprint arXiv:1704.05426}, 2017.

\bibitem{mnli}
A.~Williams, N.~Nangia, and S.~R. Bowman.
\newblock A broad-coverage challenge corpus for sentence understanding through
  inference.
\newblock {\em arXiv preprint arXiv:1704.05426}, 2017.

\bibitem{DBLP:conf/wassa/WitonCMK18}
W.~Witon, P.~Colombo, A.~Modi, and M.~Kapadia.
\newblock Disney at {IEST} 2018: Predicting emotions using an ensemble.
\newblock In A.~Balahur, S.~M. Mohammad, V.~Hoste, and R.~Klinger, editors,
  {\em Proceedings of the 9th Workshop on Computational Approaches to
  Subjectivity, Sentiment and Social Media Analysis, WASSA@EMNLP 2018,
  Brussels, Belgium, October 31, 2018}, pages 248--253. Association for
  Computational Linguistics, 2018.

\bibitem{paws_1}
Y.~Yang, Y.~Zhang, C.~Tar, and J.~Baldridge.
\newblock Paws-x: A cross-lingual adversarial dataset for paraphrase
  identification.
\newblock {\em arXiv preprint arXiv:1908.11828}, 2019.

\bibitem{young2014image}
P.~Young, A.~Lai, M.~Hodosh, and J.~Hockenmaier.
\newblock From image descriptions to visual denotations: New similarity metrics
  for semantic inference over event descriptions.
\newblock {\em Transactions of the Association for Computational Linguistics},
  2:67--78, 2014.

\bibitem{zhang2018record}
S.~Zhang, X.~Liu, J.~Liu, J.~Gao, K.~Duh, and B.~Van~Durme.
\newblock Record: Bridging the gap between human and machine commonsense
  reading comprehension.
\newblock {\em arXiv preprint arXiv:1810.12885}, 2018.

\bibitem{zhang2019bertscore}
T.~Zhang, V.~Kishore, F.~Wu, K.~Q. Weinberger, and Y.~Artzi.
\newblock Bertscore: Evaluating text generation with bert.
\newblock {\em arXiv preprint arXiv:1904.09675}, 2019.

\bibitem{paws_2}
Y.~Zhang, J.~Baldridge, and L.~He.
\newblock Paws: Paraphrase adversaries from word scrambling.
\newblock {\em arXiv preprint arXiv:1904.01130}, 2019.

\bibitem{zhao2019moverscore}
W.~Zhao, M.~Peyrard, F.~Liu, Y.~Gao, C.~M. Meyer, and S.~Eger.
\newblock Moverscore: Text generation evaluating with contextualized embeddings
  and earth mover distance.
\newblock {\em arXiv preprint arXiv:1909.02622}, 2019.

\bibitem{zhou2020webnlg}
G.~Zhou and G.~Lampouras.
\newblock Webnlg challenge 2020: Language agnostic delexicalisation for
  multilingual rdf-to-text generation.
\newblock In {\em Proceedings of the 3rd International Workshop on Natural
  Language Generation from the Semantic Web (WebNLG+)}, pages 186--191, 2020.

\bibitem{bucc_2}
P.~Zweigenbaum, S.~Sharoff, and R.~Rapp.
\newblock Overview of the second bucc shared task: Spotting parallel sentences
  in comparable corpora.
\newblock In {\em Proceedings of the 10th Workshop on Building and Using
  Comparable Corpora}, pages 60--67, 2017.

\bibitem{bucc_1}
P.~Zweigenbaum, S.~Sharoff, and R.~Rapp.
\newblock Overview of the third bucc shared task: Spotting parallel sentences
  in comparable corpora.
\newblock In {\em Proceedings of 11th Workshop on Building and Using Comparable
  Corpora}, pages 39--42, 2018.

\end{thebibliography}

\appendix

\newpage

\appendixpage

\startcontents[sections]
\printcontents[sections]{l}{1}{\setcounter{tocdepth}{2}}

\newpage

\section{Ethical Statement \& Limitation of our work}
It is important to consider the potential ethical implications and limitations of our work. One ethical concern is the potential bias in the reranking process, as the selection of the "best" hypothesis may favor certain perspectives or reinforce existing biases present in the training data. Care should be taken to ensure fairness and mitigate any potential bias before applying our methods.

\section{Dataset Description}\label{sec:additinonal_details_dataset_description}

\subsection{Task Level Information}
We provide here additionnal details on the data collection for Task Level Information. 

We gathered data from four benchmark studies, namely GLUE (General Language Understanding Evaluation) \cite{wang2018glue}, SGLUE (SuperGLUE) \cite{wang2019superglue}\footnote{Results can be accessed at \url{https://super.gluebenchmark.com/}}, XTREME \cite{hu2020xtreme} and GEM. In the GLUE dataset, there were a total of 105 systems evaluated across nine different tasks: CoLA, SST-2, MRPC, STS-B, QQP, MNLI, QNLI, RTE, and WNLI \cite{warstadt2019neural, socher2013recursive, dolan2005automatically, cer2017semeval, rajpurkar2016squad, williams2017broad, dagan2005pascal, giampiccolo2007third, bentivogli2009fifth, levesque2012winograd}. The SGLUE dataset consisted of 24 systems evaluated on 10 different tasks: BoolQ, CB, COPA, MultiRC, ReCoRD, RTE, WiC, WSC, AX-b, and AX-g \cite{clark2019boolq, de2019commitmentbank, roemmele2011choice, khashabi2018looking, zhang2018record, levesque2012winograd, pilehvar2018wic}. The XTREME benchmark comprised 15 systems and included tasks such as sentence classification (XNLI and PAXS-X), structured prediction (Universal Dependencies v2.5 and Wikiann), sentence retrieval (BUCC and Tatoeba), and question answering (XQuAD, MLQA, TyDiQA-GoldP) \cite{xnli, mnli, paws_1, paws_2, universal_2, pos_2, pos, bucc_1, bucc_2, artetxe2019massively, squad_2, rajpurkar2016squad, lewis2019mlqa, clark2020tydi}.

Each benchmark employed a variety of metrics with different scales, including accuracy, f1, and correlation. Additionally, the GEM 
 benchmark involved 22 systems evaluated using diverse metrics such as prediction length, vocabulary size, entropy, Rouge, NIST, Bleu', Meteor', Bleurt, Nubia, and Bertscore.

\subsection{Instance Level Information}

In this particular setting, our primary focus is on evaluating the performance of natural language generation (NLG) systems, as these scores are among the easiest to collect. We concentrate on five different tasks: summary evaluation, image description, dialogue, and translation. For \textit{summary evaluation}, we utilize the TAC08 \cite{dang2008overview}, TAC10, TAC11 \cite{owczarzak2011overview}, RSUM \cite{bhandari2020re}, and SEVAL \cite{fabbri2021summeval} datasets. Regarding \textit{sentence-based image description}, we rely on the FLICKR dataset \cite{young2014image}. For \textit{dialogue}, we make use of the PersonaChat (PC) and TopicalChat (TC) datasets \cite{mehri2020usr}. For the translation part, we added datasets from WMT15 \cite{stanojevic2015results}, WMT16 \cite{bojar2016findings}, WMT17 \cite{bojar2017findings}, WMT18 \cite{rej2018findings}, WMT19 \cite{barrault2019findings}, WMT20 \cite{loic2020findings}, and WMT21 \cite{farhad2021findings} in several languages such as en, ru, ts, and others.
For all datasets except MLQE, we consider automatic metrics based on S3 (both variant pyr/resp) \cite{peyrard2017learning}, ROUGE \cite{lin2004rouge} (including five of its variants \cite{ng2015better}), JS [1-2] \cite{lin2006information}, Chrfpp \cite{popovic2017chrf++}, BLEU, BERTScore \cite{zhang2019bertscore}, and MoverScore \cite{zhao2019moverscore}. For the MLQE dataset, we solely consider several versions of BERTScore, MoverScore, and ContrastScore. Additionally, we incorporate human evaluation, which is specific to each dataset.

\subsection{Data Statistics}

To give to the reader a better sens of the richness of our benchmark, we report in  \autoref{fig:systems} the statistics on our dataset. We demonstrate a diverse distribution of system counts across various datasets, ranging from a minimum of 2 systems to a maximum of 60 systems. Regarding the total number of sentences (instances) and the average number per system, as depicted in \autoref{fig:sentences} and \autoref{fig:utterances}, the smaller datasets consist of several hundred sentences in total, while the larger datasets encompass up to several hundred thousand sentences in total.

\begin{figure}[h]
  \centering
  \includegraphics[height=\textheight]{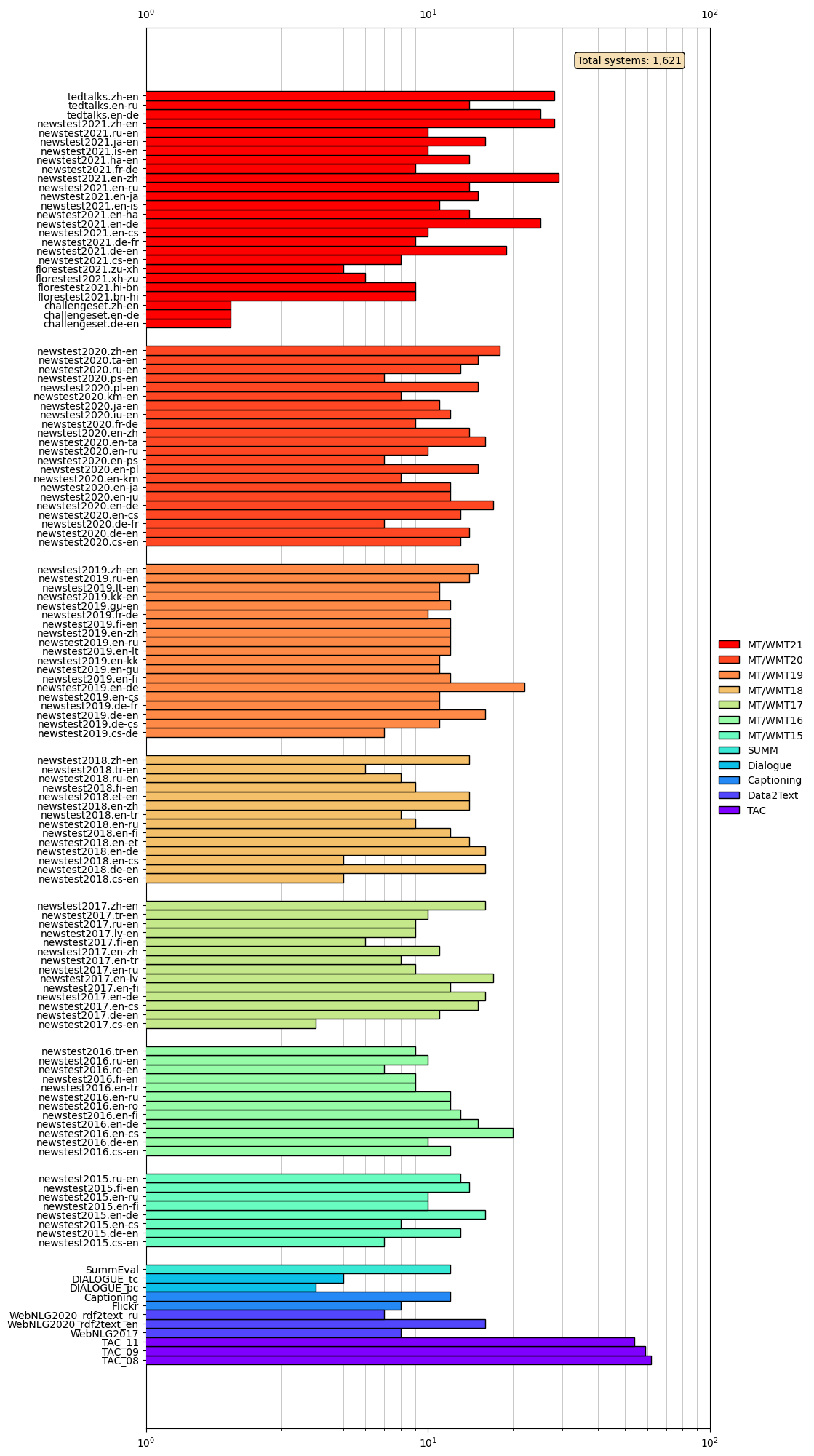}
  \caption{Number of systems in each dataset (log scale)}
  \label{fig:systems}
\end{figure}
\begin{figure}[h]
  \centering
  \includegraphics[height=\textheight]{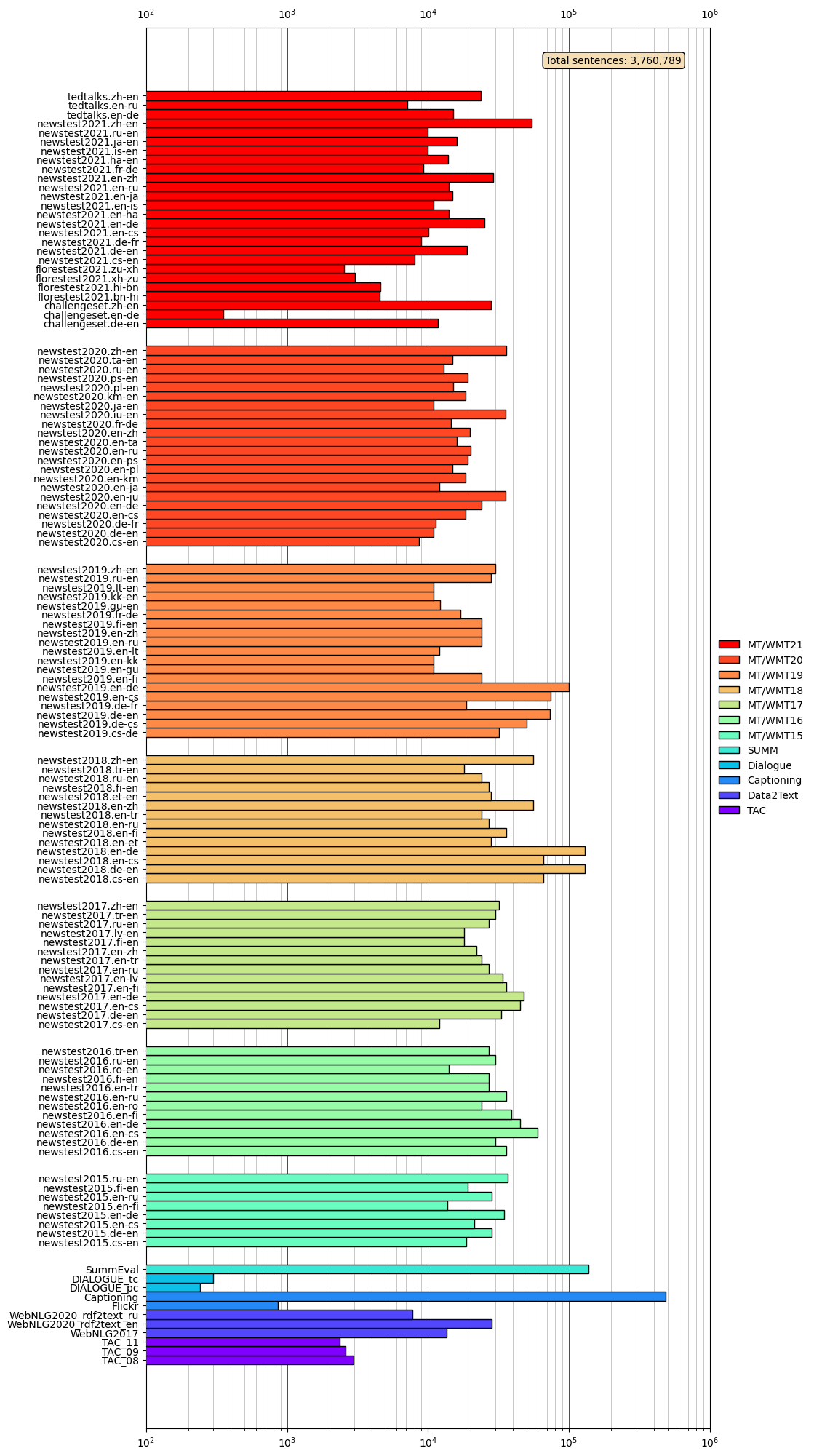}
  \caption{Number of sentences in each dataset (log scale)}
  \label{fig:sentences}
\end{figure}
\begin{figure}[h]
  \centering
  \includegraphics[height=\textheight]{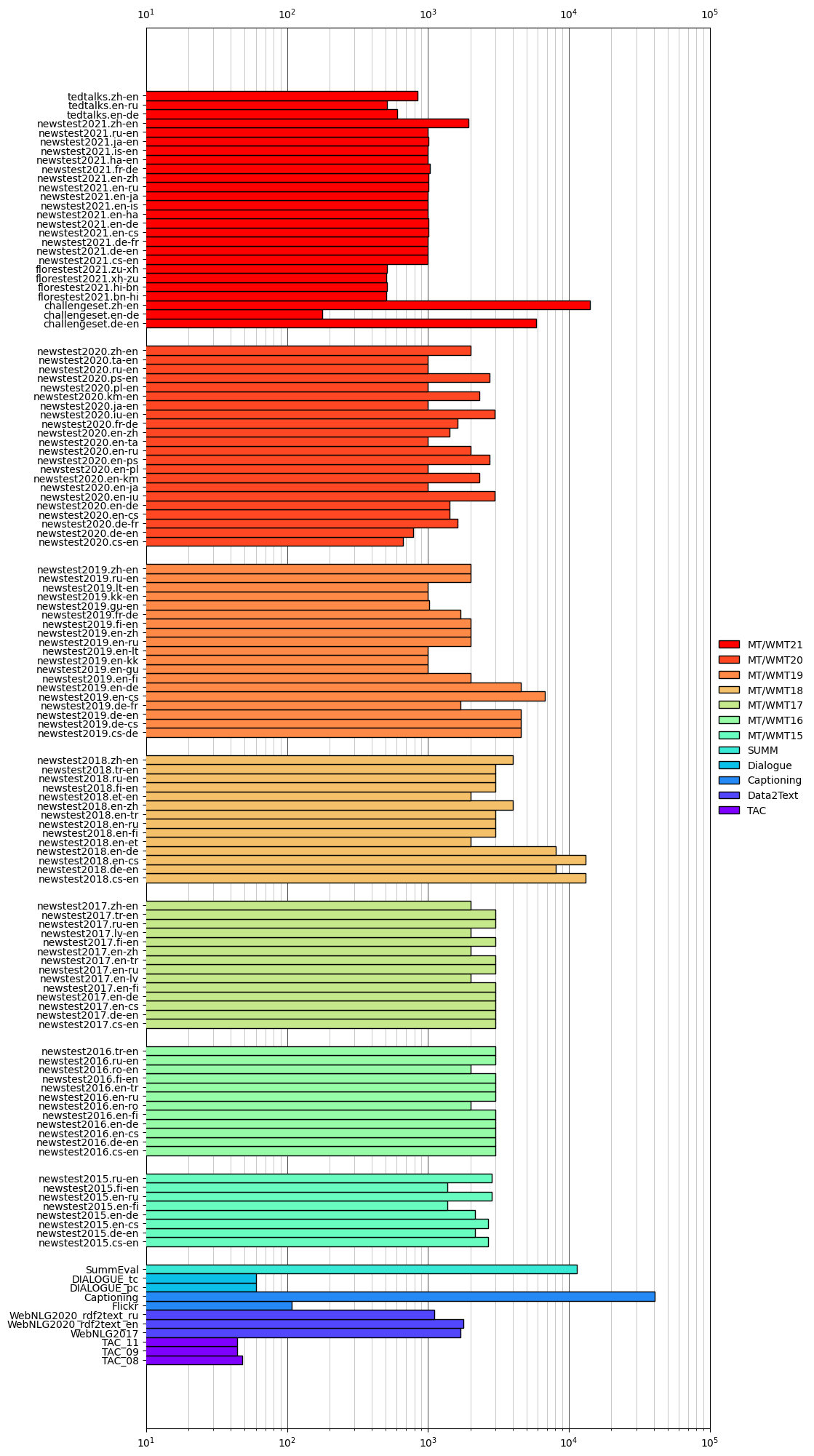}
  \caption{Average number of sentences per system in each dataset (log scale)}
  \label{fig:utterances}
\end{figure}

\section{Additional Real-Data Experiments}\label{sec:additinonal_experimetns}

In this dedicated section, we aim to provide curious readers with a deeper understanding of the capabilities of our methods by presenting additional figures and experimental results. Through these supplementary materials, we intend to shed more light on the effectiveness and potential of our approaches, enabling readers to gain valuable insights into our methods.

\subsection{Example of Ranking with missing data on XTREM}
In this section, we aim to illustrate the distinction between different rankings obtained using ${\sigma}^{l}$ and ${\sigma}^{\mu}$ on XTREM dataset for a specific noise realisation. Using \autoref{tab:xtrem_18}, we obtaine the following rankings:
\begin{itemize}
    \item ${\sigma}^{l}$ gives the following ranking : $M0>M3>M2>M1>M7>M5>M4>M8>M9$ 
    \item  ${\sigma}^{\mu}$ gives the following ranking : $M7>M4>M0>M6>M9>M2=M3>M1>M8>M5$.
\end{itemize}
We can see that the two methods disagree on the best systems in this case. However, as can be seen with our experiements, the ranking based method is more robust. 
\begin{table}[H]
\centering
\begin{tabular}{lrrrr}
\toprule
Model &   Classification &  Structured Prediction &  Question Answering &  Sentence Retrieval \\
\midrule
   M0 &             90.3 &                      X &                  76.3 &                  93.7 \\
   M1 &             90.1 &                      X &                  75.0 &                   X \\
   M2 &             89.3 &                     75.5 &                  75.2 &                  92.4 \\
   M3 &             89.0 &                     76.7 &                  73.4 &                  93.3 \\
   M4 &             88.3 &                      X &                   X &                   X \\
   M5 &              X &                      X &                   X &                   X \\
   M6 &             87.9 &                     75.6 &                   X &                  91.9 \\
   M7 &              X &                      X &                   X &                  92.6 \\
   M8 &              X &                     75.4 &                   X &                   X \\
   M9 &             88.2 &                     74.6 &                   X &                  89.0 \\
\bottomrule
\end{tabular}
\caption{XTREM dataset with 10 systems and 18 missing values ($\eta=0.45$)}
\label{tab:xtrem_18}
\end{table}

\subsection{Additional Robustness Experiment on instance level datasets}
In this section we report additionnal experiements on the instance level robustness.

\begin{figure}[H]
    \centering
    \subfloat[\centering Dialogue PC]{{\includegraphics[width=0.20\textwidth]{images/two_level/two_level_DIALOGUE_pc_data.png} }}%
    \subfloat[\centering Dialogue TC]{{\includegraphics[width=0.20\textwidth]{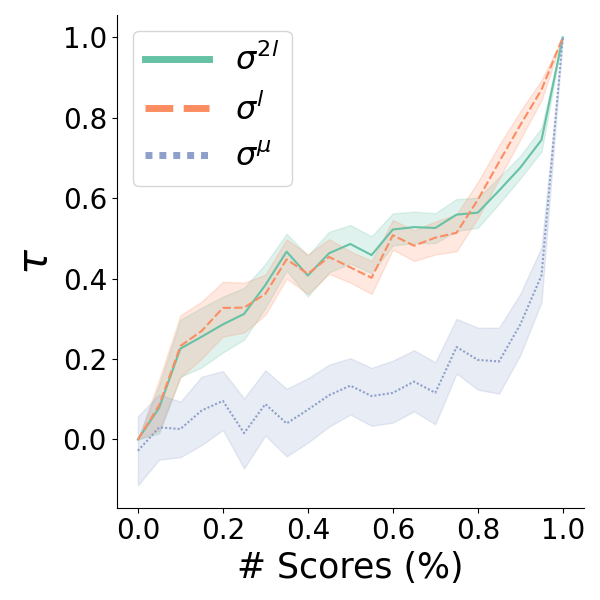} }}%
    \subfloat[\centering Flickr]{{\includegraphics[width=0.20\textwidth]{images/two_level/two_level_Flickr_data.png} }}%
    \subfloat[\centering COCO]{{\includegraphics[width=0.20\textwidth]{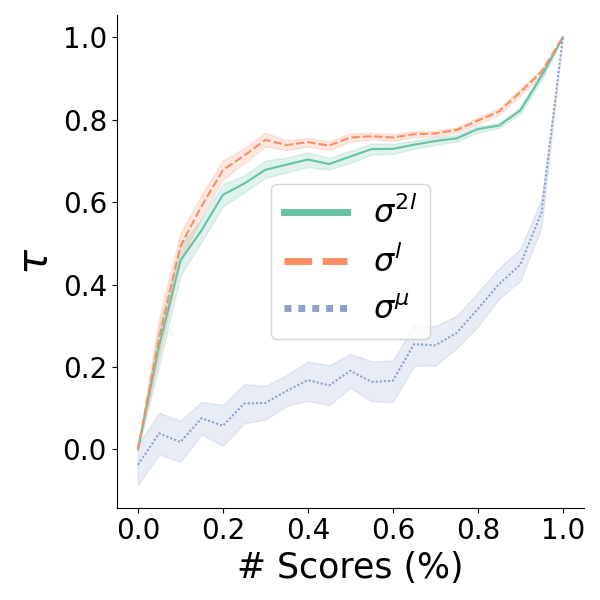} }}%
    \subfloat[\centering SummEval]{{\includegraphics[width=0.20\textwidth]{images/two_level/two_level_SUMM.png} }}\\
    \subfloat[\centering TAC 08]{{\includegraphics[width=0.20\textwidth]{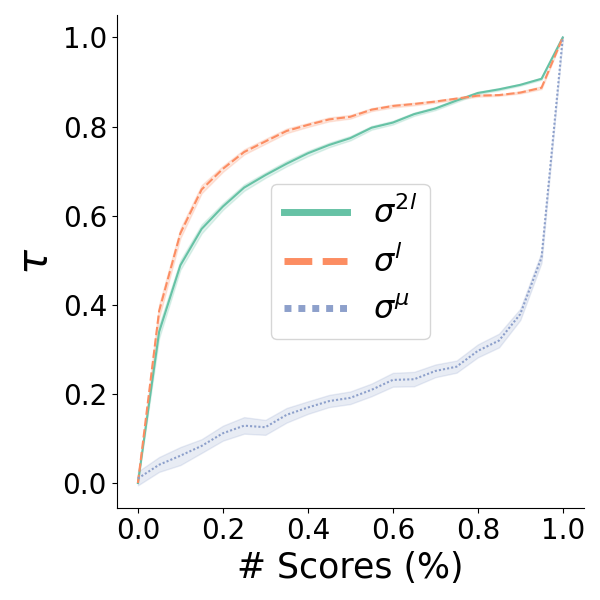} }}%
    \subfloat[\centering TAC 09]{{\includegraphics[width=0.20\textwidth]{images/two_level/two_level_TAC_09.png} }}%
    \subfloat[\centering TAC 11]{{\includegraphics[width=0.20\textwidth]{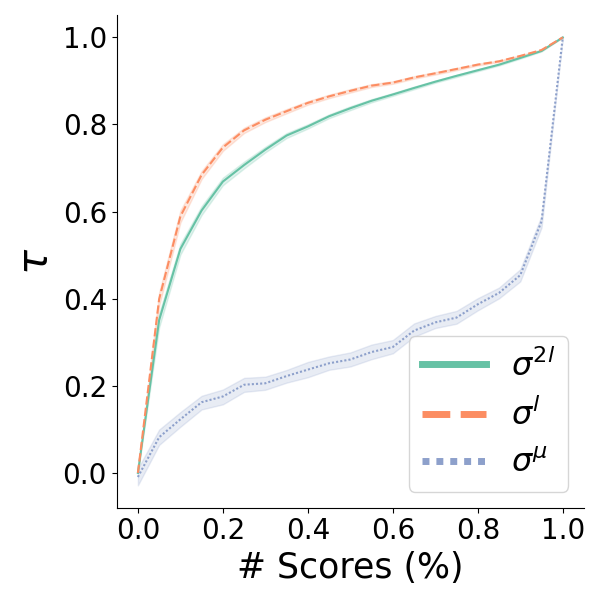} }}%
    \subfloat[\centering WebNLG2017]{{\includegraphics[width=0.20\textwidth]{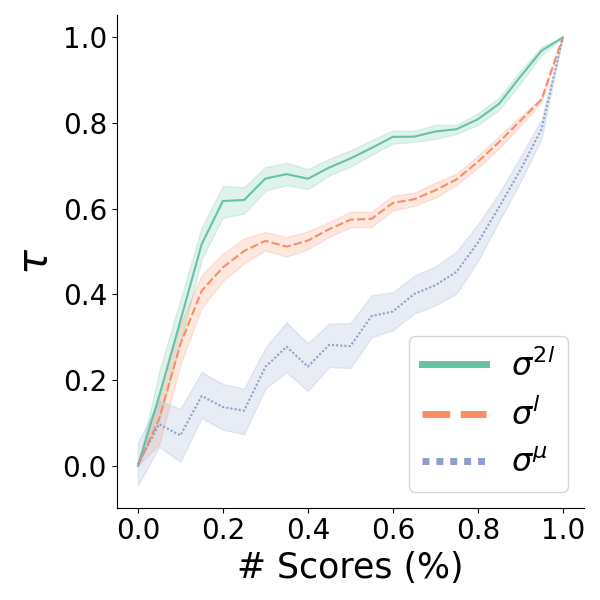} }}%
    \subfloat[\centering WebNLG2020 en]{{\includegraphics[width=0.20\textwidth]{images/two_level/two_level_WebNLG2020_rdf2text_en.png}}}\\
    \subfloat[\centering WebNLG2020 ru]{{\includegraphics[width=0.20\textwidth]{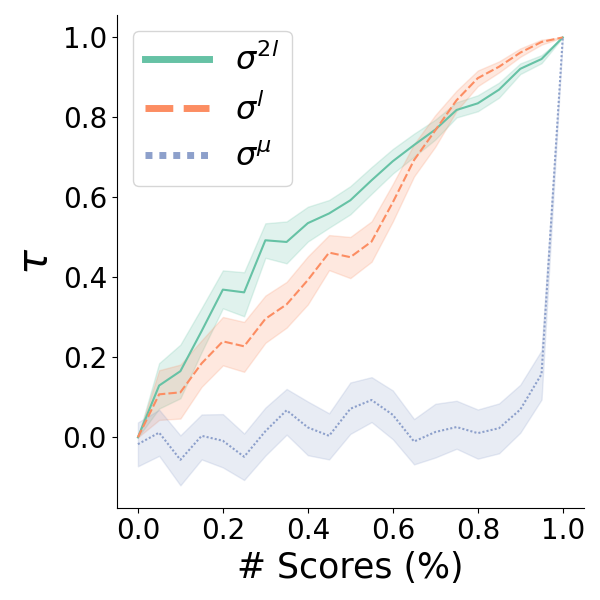}}}
        \subfloat[\centering WMT20 cs-en]{{\includegraphics[width=0.20\textwidth]{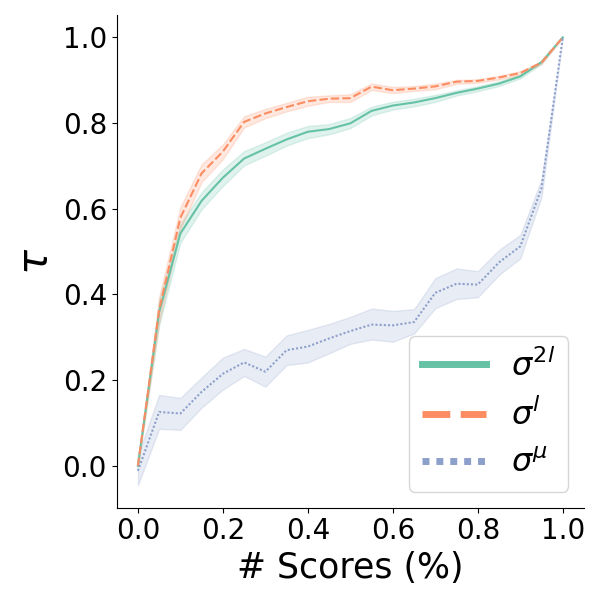} }}%
    \subfloat[\centering WMT20 pl-en]{{\includegraphics[width=0.20\textwidth]{images/two_level/two_level_newstest2020.pl-en.data.png} }}%
    \subfloat[\centering WMT21 en-de]{{\includegraphics[width=0.20\textwidth]{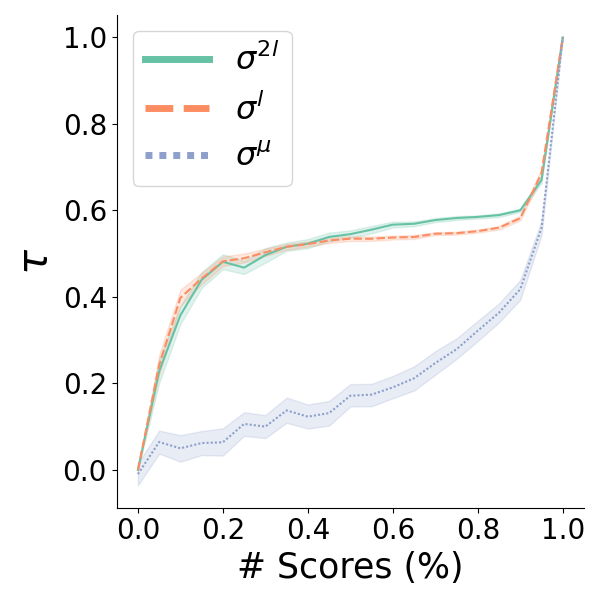}}} 
        \subfloat[\centering WMT21 en-ru]{{\includegraphics[width=0.20\textwidth]{images/two_level/two_level_newstest2021.en-ru.data.png} }}\\
    \subfloat[\centering WMT21 challengeset de-en]{{\includegraphics[width=0.20\textwidth]{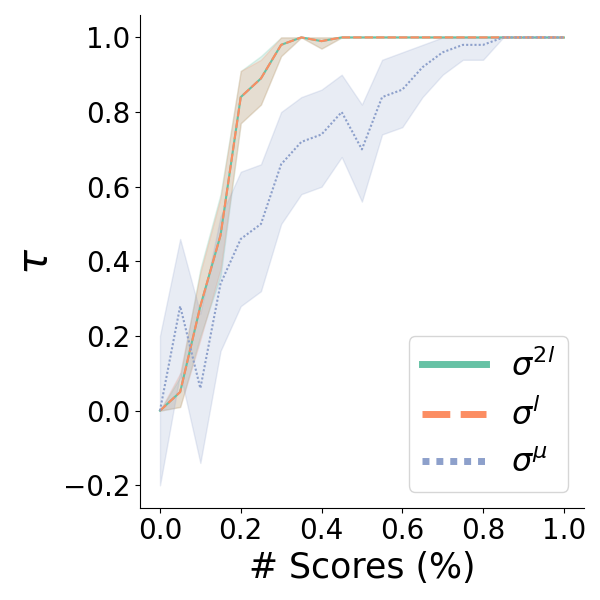} }}%
    \subfloat[\centering WMT21 challengeset en-de]{{\includegraphics[width=0.20\textwidth]{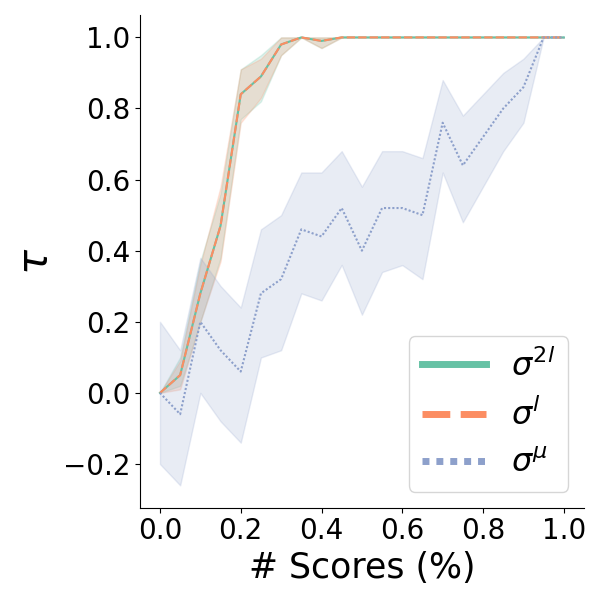} }}%
        \subfloat[\centering WMT21 challengeset zh-en]{{\includegraphics[width=0.20\textwidth]{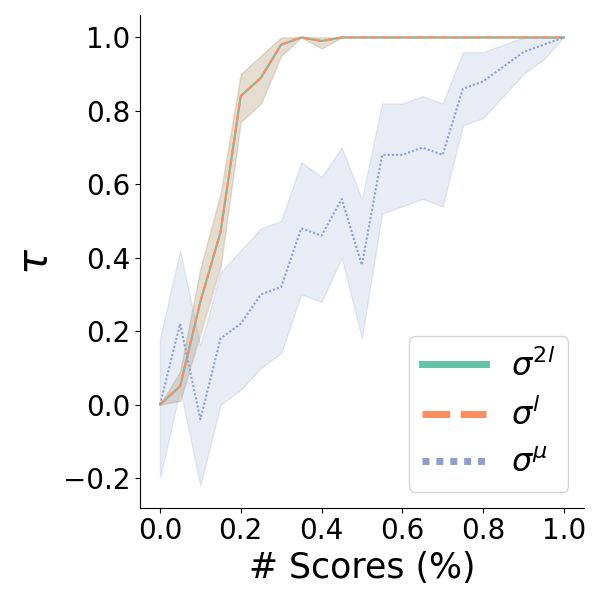} }}%
    \subfloat[\centering WMT21 florestest bn-hi]{{\includegraphics[width=0.20\textwidth]{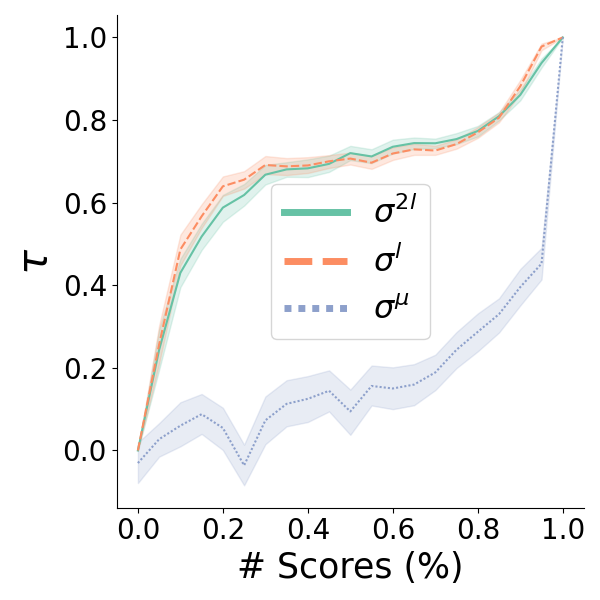}}}
    \subfloat[\centering WMT21 florestest hi-bn]{{\includegraphics[width=0.20\textwidth]{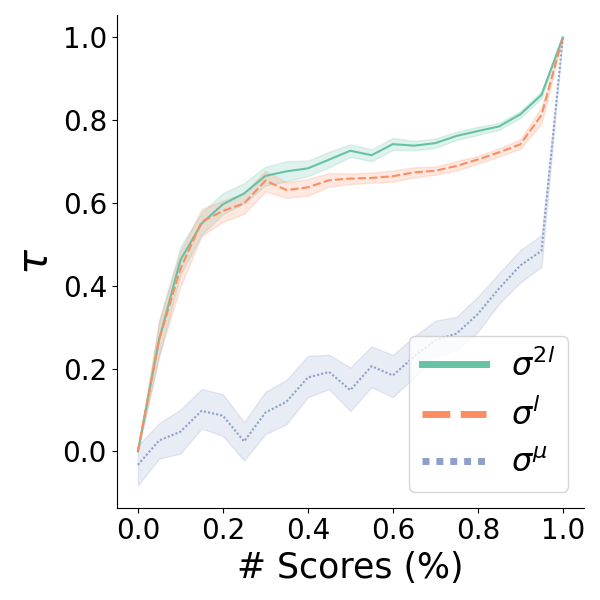} }}\\
    \subfloat[\centering WMT21 florestest xh-zu]{{\includegraphics[width=0.20\textwidth]{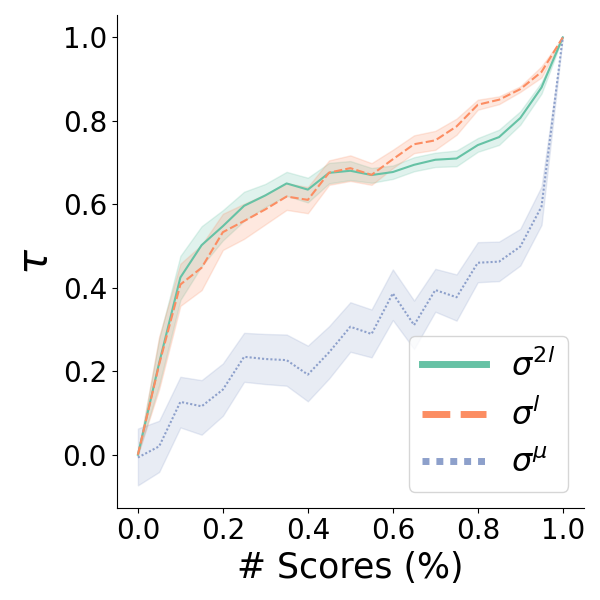} }}%
        \subfloat[\centering WMT21 florestest zu-xh]{{\includegraphics[width=0.20\textwidth]{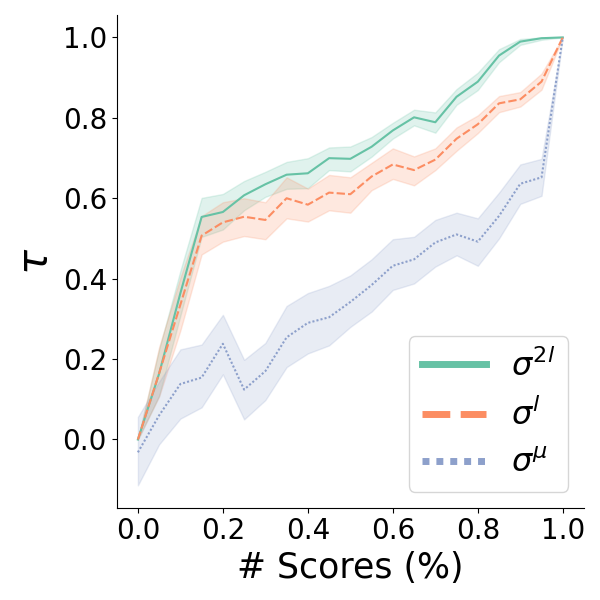} }}%
    \subfloat[\centering WMT21 cs-en]{{\includegraphics[width=0.20\textwidth]{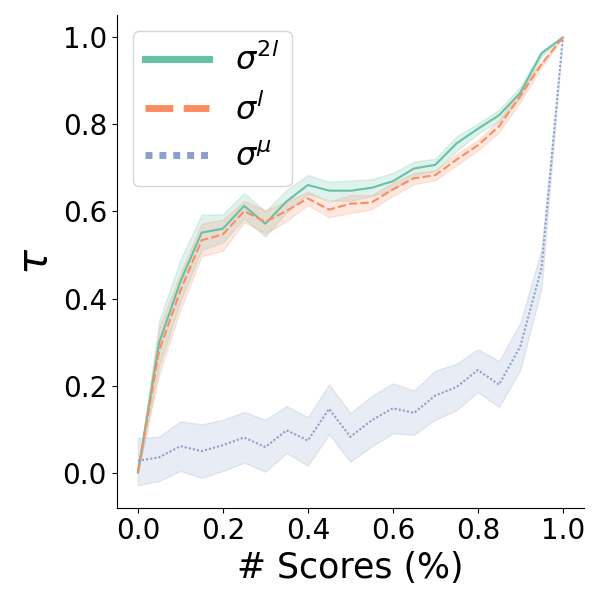} }}%
    \subfloat[\centering WMT21 en-cs]{{\includegraphics[width=0.20\textwidth]{images/two_level/two_level_newstest2021.en-cs.data.png} }}%
    \subfloat[\centering WMT21 de-fr]{{\includegraphics[width=0.20\textwidth]{images/two_level/two_level_newstest2021.de-fr.data.png} }}\\
    
    \phantomcaption
\end{figure}

\begin{figure}[H]
    \ContinuedFloat
    \centering
    
        \subfloat[\centering WMT21 en-ha]{{\includegraphics[width=0.20\textwidth]{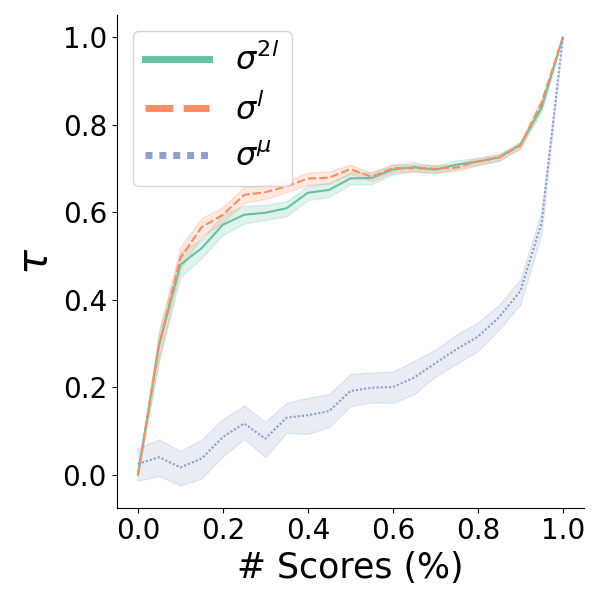} }}%
    \subfloat[\centering WMT21 en-is]{{\includegraphics[width=0.20\textwidth]{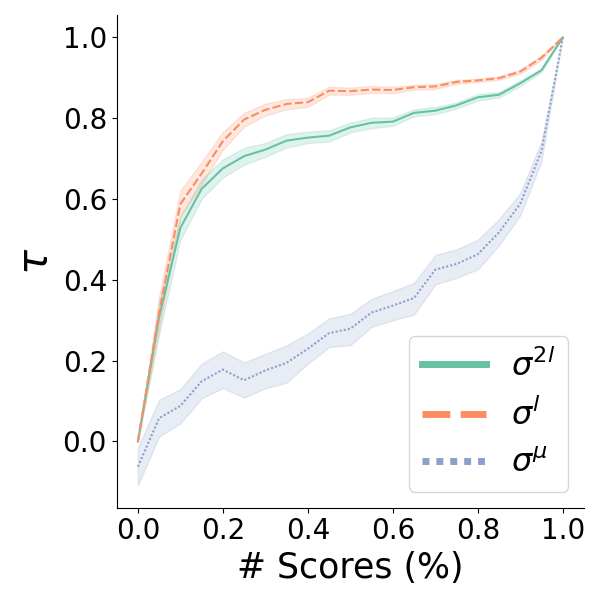} }}%
    \subfloat[\centering WMT21 fr-de]{{\includegraphics[width=0.20\textwidth]{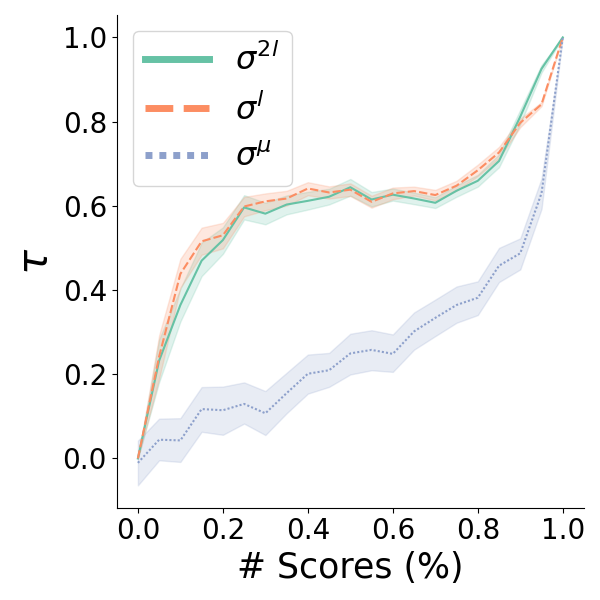} }}
        \subfloat[\centering WMT21 ha-en]{{\includegraphics[width=0.20\textwidth]{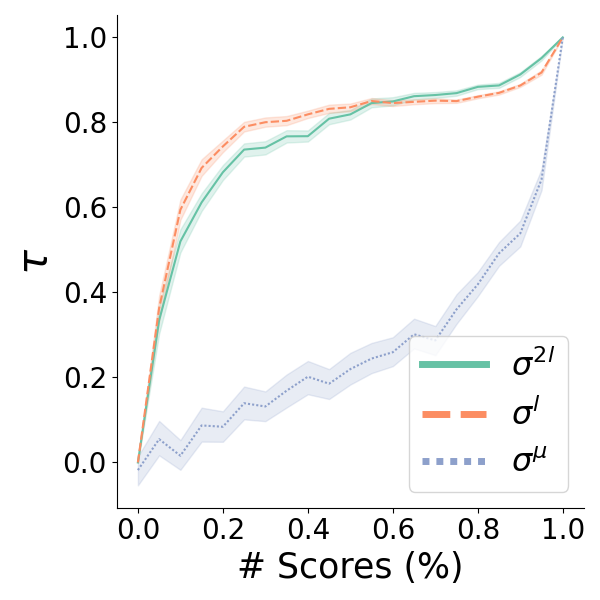} }}%
    \subfloat[\centering WMT21 is-en]{{\includegraphics[width=0.20\textwidth]{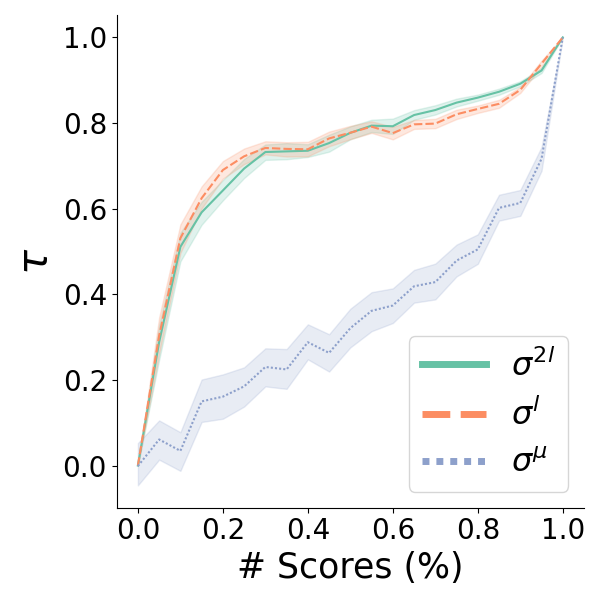} }}\\
    \subfloat[\centering WMT21 ja-en]{{\includegraphics[width=0.20\textwidth]{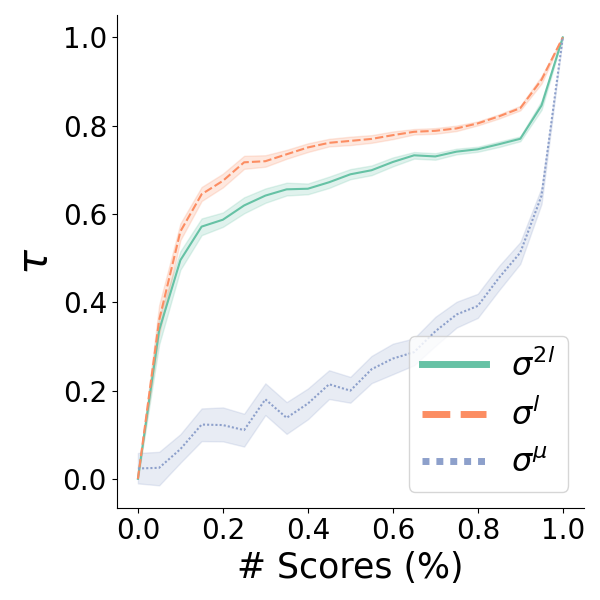}}}
    \subfloat[\centering WMT21 ru-en]{{\includegraphics[width=0.20\textwidth]{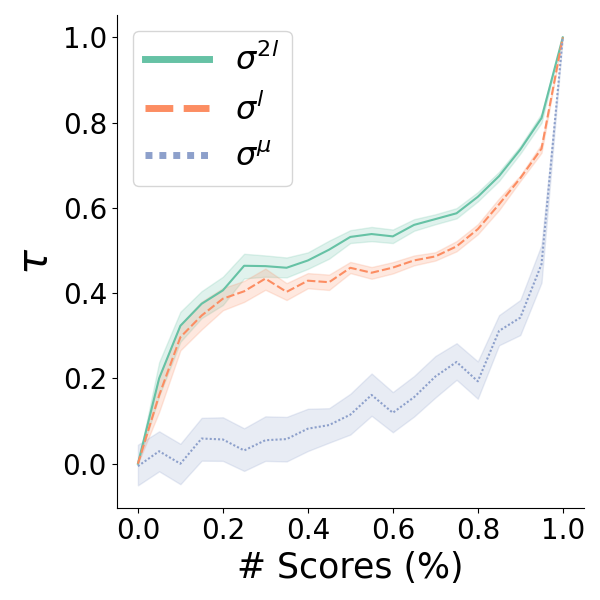} }}
    \subfloat[\centering WMT21 zh-en]{{\includegraphics[width=0.20\textwidth]{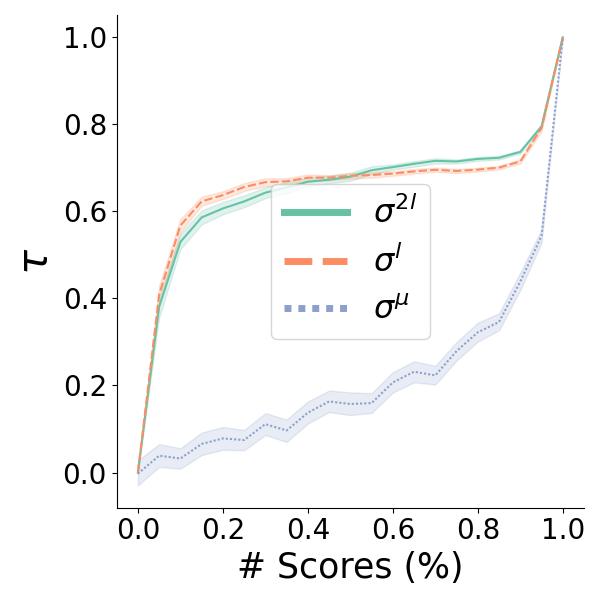} }}
    \subfloat[\centering WMT21 tedtalks en-de]{{\includegraphics[width=0.20\textwidth]{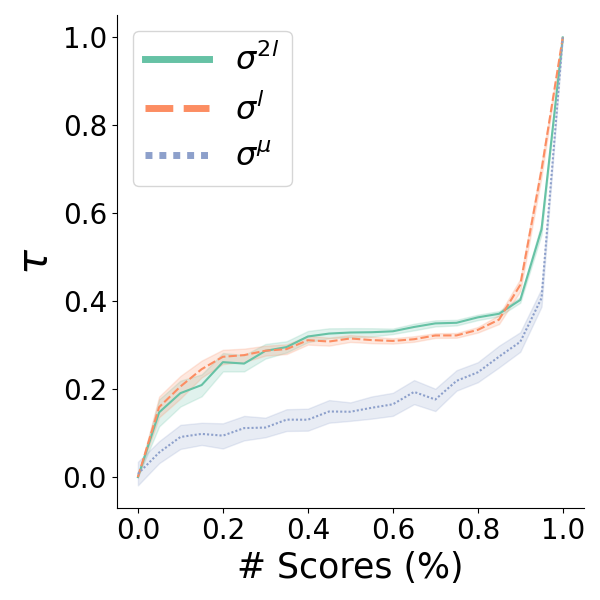}}}
    \subfloat[\centering WMT21 tedtalks en-ru]{{\includegraphics[width=0.20\textwidth]{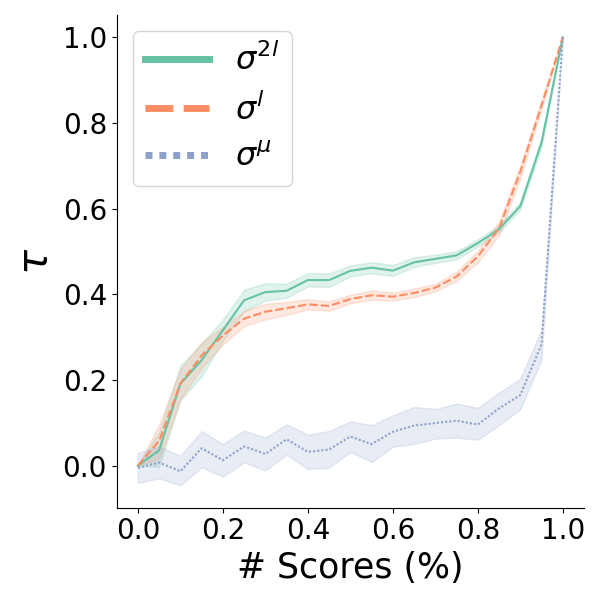} }}\\
    \subfloat[\centering WMT21 tedtalks zh-en]{{\includegraphics[width=0.20\textwidth]{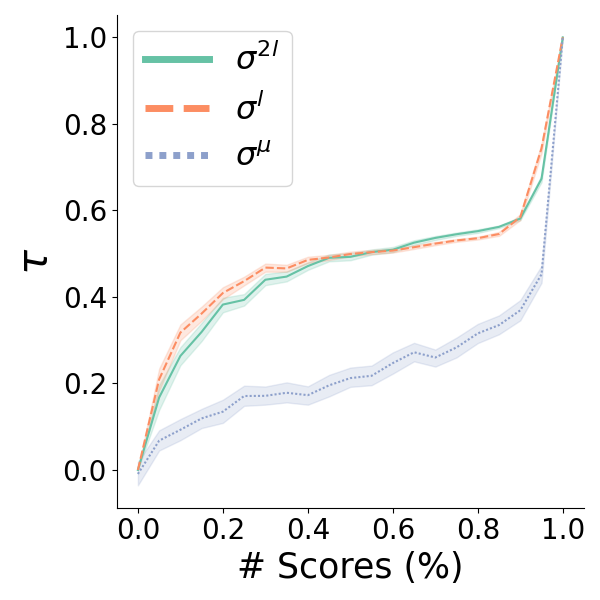} }}\\
    \caption{Instance-Level Robustness Experiment. We evaluate the robustness of our proposed aggregation methods, namely ${\sigma}^{2l}, {\sigma}^{l}$, and the mean aggregation method ${\sigma}^\mu$, by randomly removing a proportion $\eta$ of all instances on a specific task for a specific system. Each experiment is repeated 100 times for each proportion.}%
    \label{fig:instance-robust}%
\end{figure}

\subsection{Additional Confidence Analysis on Task Level}
In this section, we present additional experiments conducted on four instance-level datasets. We computed confidence intervals for the instance-level, similar to the approach used in Section \autoref{statistical-analysis}. Consistent with the main findings in the paper, our observations reveal that closer performance among systems is indicated near the diagonal and we can clearly observe group of systems. This analysis of confidence intervals provides valuable insights into the relative performance of different systems.

\begin{figure}[H]
    \centering
    \subfloat[\centering TAC08]{{\includegraphics[width=0.50\textwidth]{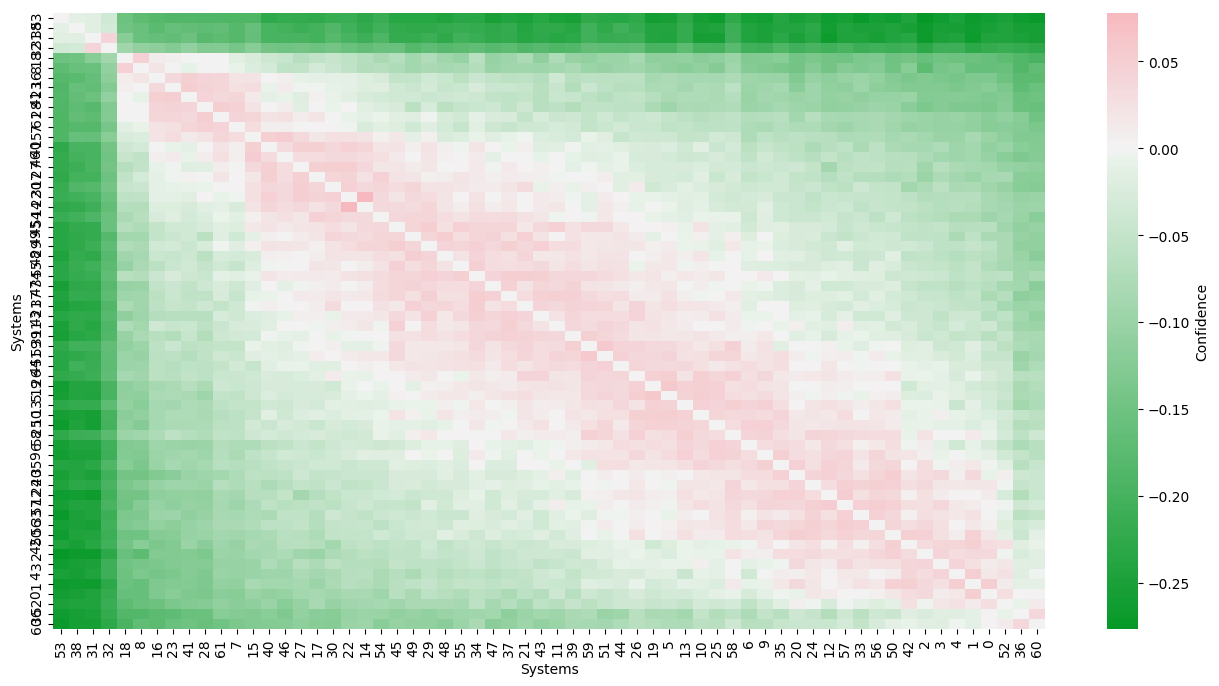} }}%
        \subfloat[\centering WMT21 en-de]{{\includegraphics[width=0.50\textwidth]{images/statistical/newstest2021.en-de.data_2.png} }} \\
    \subfloat[\centering WMT21 en-ha]{{\includegraphics[width=0.50\textwidth]{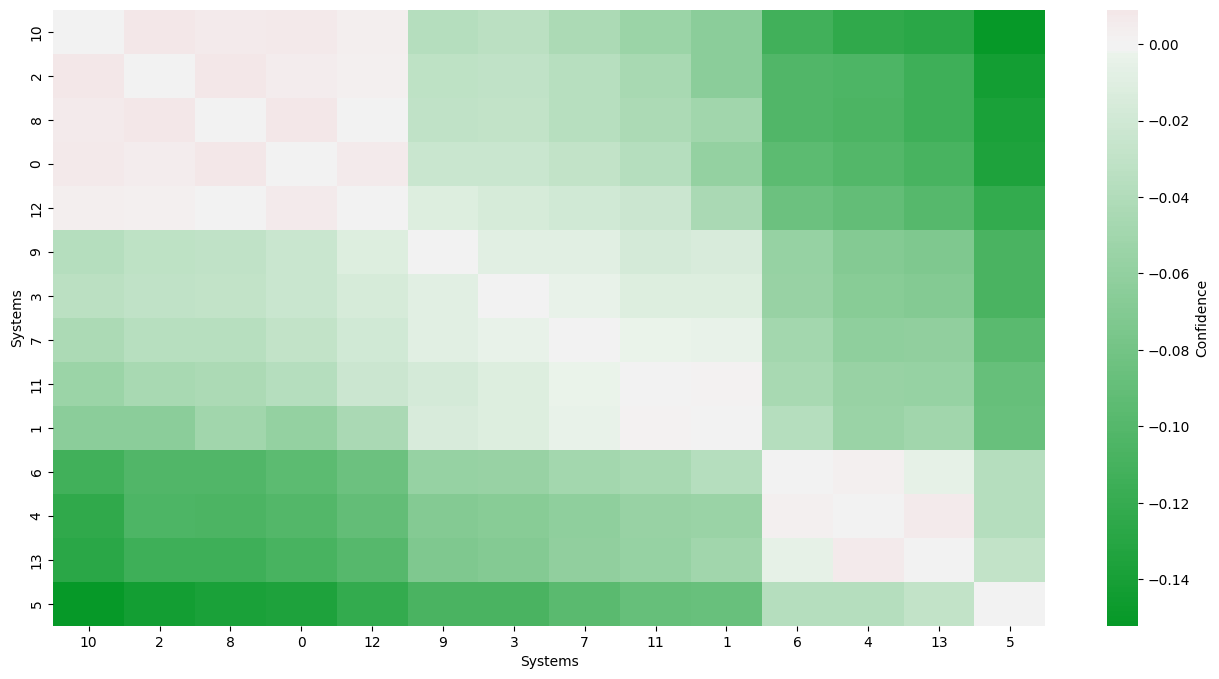} }}%
    \subfloat[\centering WMT21 en-zh]{{\includegraphics[width=0.50\textwidth]{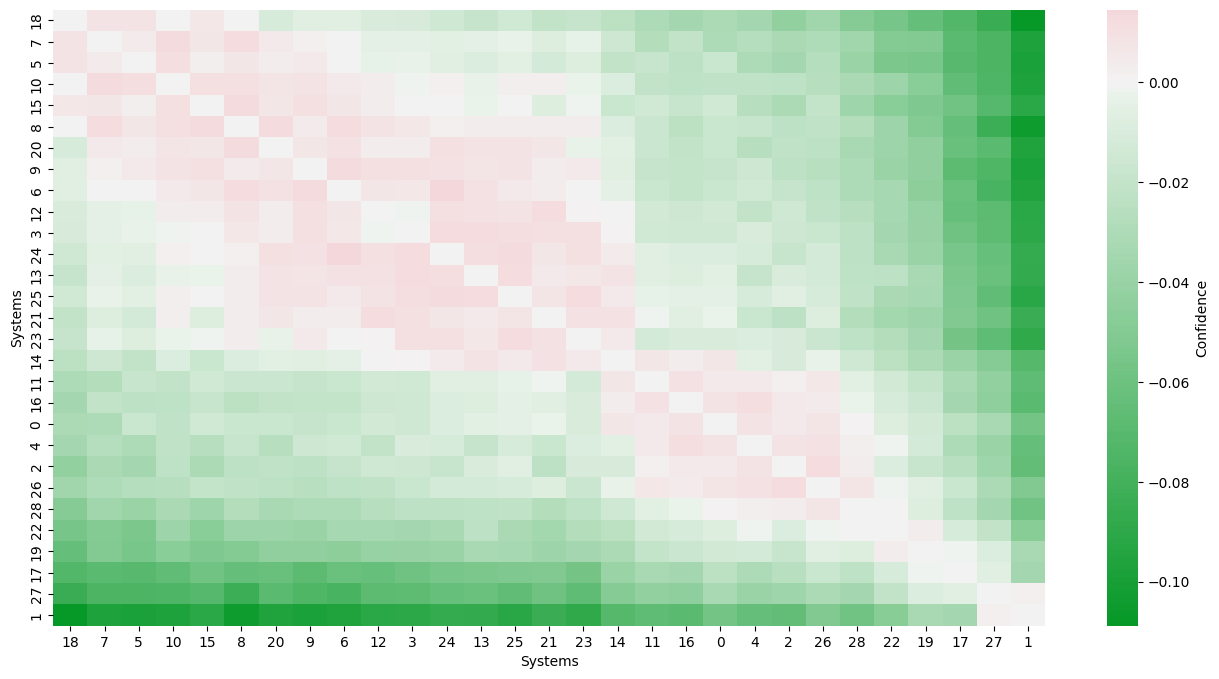}}}
    \caption{Confidence intervals for various instance level datasets with $\eta=0.2$ and $\delta=0.01$}
    \label{fig:more-statistical}
\end{figure}

\subsection{Future Work}
In the futur we would like to extend our work to other sequence generation task \cite{colombo2019affect,dinkar2020importance,jalalzai2020heavy,chapuis2020hierarchical,colombo2021beam}, fairness \cite{colombo2021learning,colombo2021novel}, safe ai \cite{DBLP:phd/hal/Colombo21,DBLP:conf/coling/ChhunCSC22,DBLP:conf/nips/ColomboNIC22,DBLP:conf/acl/ColomboSNP22,DBLP:journals/corr/abs-2302-09852,DBLP:journals/corr/abs-2212-09171}, classification \cite{DBLP:conf/wassa/WitonCMK18,garcia2019token,colombo2020guiding,chapuis2021code}

\section{On the Rankings}\label{sec:proof_ranks}

This section gathers technical considerations on the ranking methods used in our algorithm.

\subsection{Borda Count on permutations (in vector notation)}\label{appendix:borda}

\begin{remark}
The Borda count is a ranking system that aggregates a set of permutations $\sigma^1, \ldots, \sigma^L \in \mathfrak{S}_N$ by summing the ranks of each system and then ranking the obtained sums. The procedure is as follows:
\begin{enumerate}[wide, labelwidth=!, labelindent=0pt]
\item Compute $\mathrm{sum}_n := \sum\limits_{l=1}^L \sigma^l_n \,$ for every $1 \leq n \leq N$,
\item Output $\sigma := \mathrm{Borda}(\sigma^1, \ldots, \sigma^L) \in \mathfrak{S}_N$ that ranks the sums, $\mathrm{sum}_n$ (${\mathrm{argsort}(\mathrm{argsort}( \mathrm{sum}_1, \ldots, \mathrm{sum}_T))}$).
\end{enumerate}
\end{remark}

\subsection{Borda Count on permutations in pairwise matrix notation}

In \autoref{ssec:mat} we argue that a ranking $\sigma \in \mathfrak{S}_N$ can also be written as a pairwise matrix  and in \autoref{ssec:sigmal} and \autoref{ssec:sigma2l} we further elaborate on how to write ranking data-set $D$ in  pairwise matrix form $M^D\in [0,1]^{ N \times N }$. Under this notation, the final aggregated ranking $\sigma$ for the Borda count algorithm can be shown to be equivalent to the permutation that sorts the sum of the columns in $M^D$, 

\begin{equation}
\sigma = argsort\left(argsort\left[\sum_i M^D_{i,0}, \cdots, \sum_i M^D_{i,N}\right]\right).
\end{equation}

\subsection{Generating all compatible rankings}

In this section, we detail the computation of the $M^{\pi}_{i,j}$ when item $i$ is not evaluated and item $j$ is evaluated. Let us fix some notation first. For the following, 
$k$ is the number of observed systems in $\pi$, 
item $i$ is not evaluated,
item $j$ is evaluated and
$r$ is the (partial) rank of item $j$. Under this setting, we set $M^{\pi}_{i,j}=p(n,k,r)$, i.e., the proportion of compatible rankings that rank $i$ before $j$ when $\pi$ has $k$ items. The closed-form expressions for this quantities is given in~\autoref{eq:detail_p}. Here we note that $t(n,k)$ is the total number rankings of $n$ items of compatible with $\pi$, $S^a_b$ is the number of shuffles of two lists of lengths $a$ and $b$ and $V^a_b$ denotes the variations of $a$ out of $b$ items, i.e., the number of possible arrangements of selections of  $a$ objects out of $b$, where the order of the selected objects matters.  

\begin{equation}\label{eq:detail_p}
    \begin{split}
    p(n,k,r) &=\sum_{i=0}^{n-k-1} V^i_{n-k-1} *  (i+1) * S^r_{i+1}  (n-k-i-1)! * S^{n-k-i-1}_{k-r-1}/t(n,k) \\ 
    t(n,k) &=  (n-k)!* S^k_{n-k}\\
S^a_b& = (a+b)!/(a!+b!) \\
V^a_b &= a!/(b-a)!
    \end{split}
\end{equation}

\begin{remark}
A naive algorithm for generating the matrix $M^{\pi}$ from $\sigma\in S_{N-r_{tk}}$ would have factorial complexity and it is thus exorbitant in practice for relatively small number of systems, say $N>10$. However, our solution has a complexity of $O(n^3)$, and can be precomputed once at the begining of the benchmarking process to efficiently generate the pairwise matrix $M^{\pi}$ from partial ranking $\pi$. 
\end{remark}

\subsection{Proof of uniformity}

In this section, we give the intuition and the proof for \autoref{eq:detail_p}. This section follows a classic strategy on Enumerative Combinatorics~\cite{Stanley1986,Wilf1999}: if we can define an algorithm to generate compatible permutations uniformly at random (such as that in Algorithm~\ref{algo:compatible}), we can easily adapt it to count those permutations to yield an efficient counting expression, as we do in \autoref{eq:detail_p}. 

We start by introducing 2 basic operations of $permute$ and $shuffle$, along with the number of possible outcomes of these operations. 

\textbf{Permute a list - $permute(l)$} Given a list of $n$ objects, generate a permutation of these items. There are $n!$ possible ways of permuting  $n$ items. An efficient way for generating random permutations is the Fisher-Yates-Knuth algorithm~\cite{Knuth}. 

\textbf{Shuffle two lists - $shuffle(A,B)$ } Given two disjoint lists of distinct elements $A,B$ of lengths $a,b$ respectively, generate a permutation $\sigma$ of the two lists of length $a+b$ in such a way that the relative order of the items in the lists $A$ and $B$ is respected in $\sigma$. This name and idea is based on the popular way of shuffling two decks of cards~\cite{Bayer.Diaconis-1992}. Its easy to see that Algorithm~\ref{algo:shuffle} generates every possible shuffling with equal probability. The total number of shuffles of lists $A,B$ is given in \autoref{eq:detail_p} as $S^a_b$.

\begin{algorithm}[H]\label{algo:shuffle}
\For{$i\in [a+b]$}{
    $rand \gets $ random number in $[0,1]$\;
    \eIf{$rand > 0.5 \lor B$ is empty $\land$ A is non empty}{
        $\sigma(i)=pop(A) $\;
    }{
        $\sigma(i)=pop(B) $\;
  }
}
\caption{Generate a random shuffle of lists $A$ and $B$}
\end{algorithm}

\textbf{Counting complete, compatible rankings}
At this point, we are ready to detail the expression of $p(n,k,r)$ in \autoref{eq:detail_p}, both the intuition and the proof of uniformity. 
For this, we propose in Algorithm~\ref{algo:compatible} to sample complete, compatible rankings and then adapt this sampling algorithm to a counting algorithm in Theorem~\ref{thm:counting}. 

\textbf{Notation} We start by fixing the notation.
Let $\beta $ be  a partial ranking of length $k$ which includes item $j$ in rank $r$, $\beta_1 \succ \ldots \succ \beta_r = j \succ \ldots \succ \beta_k$. Let $\eta$ be a disjoint set of  $n-k$ items which have not been ranked and which includes the unobserved item $i$. The goal is to generate (i) a compatible ranking with $\beta$ (a ranking $\sigma$ of all the items in such a way that the relative ordering of the items of $\beta$ is maintained) and (ii) which ranks item $i$ before item $j$. 
We denote the "$s$-head" of a list to the items in the first $s$ positions in that list.

\textbf{Intuition} We are now ready to explain the intuition. Each of the possible compatible permutations that rank $i$ before $j$ is generated in the following way:

Algorithm~\ref{algo:compatible} generates  permutations that rank item $j$ at position $s$, item $i$ before $j$ and we iterate for all possible values of $s$. First, in line \autoref{alg:1} we select $s-1$ items randomly from $\eta$, where order of the items matter (i.e., a variation). Then, we insert item $i$ in a random position of this list, denoted $\eta_{head}$ in \autoref{alg:2}. In \autoref{alg:3} we shuffle these two lists, i.e., $\eta_{head}$ and the $r-$head of $\beta$, $\beta_{head}$, i.e., the sublist with the items that are ranked before $j$. The result of the shuffling process is the $s+r$-head of the output permutation $\sigma$. We permute the rest of the unobserved items denoting these list $\eta_{tail}$, in \autoref{alg:5}. Finally, we shuffle this list $\eta_{tail}$ and the $k-r$-tail of $\eta$ in \autoref{alg:6}. The result of this shuffle is the tail of $\sigma$. Finally, in \autoref{alg:7} we return the concatenation of $\sigma_{head},j,\sigma_{tail}$, which is clearly a compatible permutation with $\beta$ as the relative order of the items in $\beta$ is maintained in the output. 

\begin{algorithm}[H]\label{algo:compatible}
\For{$s\in [n]$}{
    $\eta_{head} \gets s-1$ items from $\eta$ where the order matters  \label{alg:1} \;
    $\eta_{head} \gets $ insert $i$ in $\eta_{head}$ \label{alg:2} \;
    $\sigma_{head} \gets $ shuffle$(\eta_{head},\beta_{head})$ \label{alg:3}  \;
    $\eta_{tail} \gets \eta \setminus \eta_{head}$ \label{alg:4}  \;
    $\eta_{tail} \gets $ permute$(\eta_{tail})$  \label{alg:5}  \;
    $\sigma_{tail} \gets $ shuffle$(\eta_{tail},\beta_{tail})$  \label{alg:6} \;
    \Return $(\sigma_{head} \succ j \succ \sigma_{tail})$  \label{alg:7} \;
}
\caption{Generate a random ranking among those compatible with $\beta$}
\end{algorithm}

It is easy to see that Algorithm~\ref{algo:compatible} generates the target permutations uniformly at random. Following a classic strategy on Enumerative Combinatorics~\cite{Stanley1986,Wilf1999} we use this algorithm as a proof for $p(n,k,r)$. 


\begin{thm}\label{thm:counting}
    The number of complete permutations of $n$ items compatible  with partial ranking $\beta$ that rank the unobserved item $i$ before the observed item $j$ is given by the following expression, $$p(n,k,r) =\sum_{i=0}^{n-k-1} V^i_{n-k-1} *  (i+1) * S^r_{i+1}  (n-k-i-1)! * S^{n-k-i-1}_{k-r-1}/t(n,k).$$
\end{thm}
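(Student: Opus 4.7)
The plan is to prove the identity by using Algorithm~\ref{algo:compatible} as a counting template, exploiting the standard principle from enumerative combinatorics that a uniform sampling procedure for a finite set immediately yields a closed-form count for that set, provided distinct sequences of internal choices yield distinct outputs. The strategy therefore splits into two pieces: enumerate the choices made by the algorithm (which produces the summand), and verify a bijection between the choice tree of the algorithm and the set of compatible permutations in which item $i$ precedes item $j$.

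First I would rewrite the algorithm in non-random form. Fix $\beta_{\mathrm{head}} = \beta_1 \succ \cdots \succ \beta_{r-1}$ and $\beta_{\mathrm{tail}} = \beta_{r+1} \succ \cdots \succ \beta_k$, let $\eta$ denote the set of $n-k$ unobserved items (which includes the special item $i$), and introduce an outer index $\ell$ (renamed from the paper's overloaded summation variable $i$) recording the number of non-special unobserved items placed before $j$; equivalently, there are $\ell+1$ unobserved items before $j$ once item $i$ itself is counted. Conditionally on $\ell$, the count factorizes as: (a) choose and order $\ell$ elements from the $n-k-1$ non-special unobserved items, contributing $V^{\ell}_{n-k-1}$; (b) insert item $i$ into this list at one of $\ell+1$ slots; (c) interleave this length-$(\ell+1)$ list with $\beta_{\mathrm{head}}$ while preserving relative order, contributing a shuffle factor of the form $S^{\cdot}_{\ell+1}$; (d) permute the remaining $n-k-\ell-1$ unobserved items, contributing $(n-k-\ell-1)!$; and (e) interleave that list with $\beta_{\mathrm{tail}}$, contributing a second shuffle factor. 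Summing over $\ell \in \{0, \ldots, n-k-1\}$ and dividing by the total $t(n,k) = (n-k)!\,S^k_{n-k}$ of compatible complete permutations yields the target expression.

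For the uniformity (equivalently, the ``no double counting'') half of the argument, I would proceed by reconstructing the algorithm's choices from the output. Given any complete permutation $\sigma$ compatible with $\beta$ that ranks $i$ before $j$, the position of $j$ in $\sigma$ is determined, hence so is $\ell$; the ordered sublist of non-special unobserved items appearing before $j$ is determined; the insertion position of $i$ within that sublist is determined; and the two shuffle patterns together with the trailing permutation of the remaining unobserved items are determined by restricting $\sigma$. Conversely, any legal sequence of choices produces a valid compatible permutation with $i \prec j$. This bijection, combined with the product rule applied to the five independent choices above, yields the claimed formula.

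The main obstacle I anticipate is bookkeeping the shuffle subscripts against the paper's conventions. In particular, I must verify that the factor $S^{r}_{\ell+1}$ written in the statement really is the number of order-preserving interleavings of $\beta_{\mathrm{head}}$ (of length $r-1$) with the length-$(\ell+1)$ unobserved head, and similarly that $S^{n-k-\ell-1}_{k-r-1}$ counts interleavings with $\beta_{\mathrm{tail}}$; small off-by-one shifts in the notation $S^{a}_{b} = \binom{a+b}{a}$ will need careful checking, as will reconciling the algorithm's description (where $j$ is prepended between $\sigma_{\mathrm{head}}$ and $\sigma_{\mathrm{tail}}$) with the way $\beta_{\mathrm{head}}$ and $\beta_{\mathrm{tail}}$ are indexed. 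A secondary subtlety is confirming that $t(n,k)$ really is the cardinality of the set of compatible complete permutations (a textbook shuffle-of-two-decks count giving $n!/k!$), so that $p(n,k,r)$ is rigorously identified with the proportion used to populate the entries of $M^{\pi}$.
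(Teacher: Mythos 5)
Your proposal takes essentially the same route as the paper's own proof: both treat Algorithm~\ref{algo:compatible} as a counting template, establish a bijection between its choice sequences and the compatible permutations with $i \succ j$, multiply the number of options at each line (variation, insertion, two shuffles, tail permutation), sum over the size of the unobserved head, and normalize by $t(n,k)$. Your version is in fact somewhat more careful than the paper's --- you disentangle the overloaded summation index from the item label $i$, make the reconstruction map for the bijection explicit rather than asserting it, and correctly flag the off-by-one bookkeeping in the shuffle subscripts and the typo-prone definitions of $S^a_b$ and $V^a_b$ as the points needing verification.
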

\begin{proof}
    It is easy to see that in Algorithm~\ref{algo:compatible} there is a bijection between the permutations in the target (that is, the permutations compatible with $\beta$ for which $i\succ j$) and each outcome of Algorithm~\ref{algo:compatible}. Clearly, for uniform at random outcomes of the $shuffle$ and $permute$ operations, the outcome of Algorithm~\ref{algo:compatible} will be random as well. Therefore, the number of possible outcomes of the algorithm equals the number of permutations in the target.

    It follows that each term in $p(n,k,r)$ Each term in the previous expression comes from a different line in \ref{algo:compatible}:
\begin{itemize}
    \item Line~\ref{alg:1}: The number of variations of $i$ items out of $n-k-1$  is $V^i_{n-k-1}$.
    \item Line~\ref{alg:2}: There are $s+1$ ways of inserting item $i$, thus the term $(r+1)$.
    \item Line~\ref{alg:3}: There are $S^r_{s+1}$ ways of shuffling $\eta_{head}$ and $\beta_{head}$.
    \item Line~\ref{alg:5}: There are $(n-k-s-1)!$ possible permutations of the items in $\eta_{tail}$. 
    \item Line~\ref{alg:6}: There are $S^{n-k-s-1}_{k-r-1}$ ways of shuffling the two tails. 
    \item Line~\ref{alg:7}: Finally, since we compute the proportion by dividing among the total number of compatible permutations. 
\end{itemize}

By repeating this process for all $s<n-k-1$ the proof is completed. 

\end{proof}

\end{document}